\newcommand{\norm}[1]{\left\lVert#1\right\rVert}
\newcommand{\Dcal}[0]{\mathcal{D}}
\newcommand{\Lcal}[0]{\mathcal{L}}
\newcommand{\Ocal}[0]{\mathcal{O}}
\newcommand{\expect}{\operatorname{\mathbb{E}}}
\newcommand{\Var}{\operatorname{Var}}
\newcommand{\Cov}{\operatorname{Cov}}
\newcommand{\VAR}[1]{\Var\left[#1\right]}
\newcommand{\VARR}[2]{\Var_{#1}\left[#2\right]}
\newcommand{\COV}[2]{\Cov\left[#1 , #2\right]}
\newcommand{\E}[1]{\expect\left[#1\right]}
\newcommand{\EE}[2]{\expect_{#1}\left[#2\right]}
\newcommand{\inner}[2]{\langle #1, #2 \rangle}
\def\eqref#1{equation~\ref{#1}}
\def\1{\bm{1}}
\def\vtheta{{\bm{\theta}}}
\def\vxi{{\bm{\xi}}}
\def\va{{\bm{a}}}
\def\vb{{\bm{b}}}
\def\vd{{\bm{d}}}
\def\vh{{\bm{h}}}
\def\vx{{\bm{x}}}
\def\vy{{\bm{y}}}
\DeclareMathAlphabet{\mathsfit}{\encodingdefault}{\sfdefault}{m}{sl}
\SetMathAlphabet{\mathsfit}{bold}{\encodingdefault}{\sfdefault}{bx}{n}
\newtheorem{theorem}{Theorem}
\newtheorem{definition}{Definition}
\newtheorem{assumption}{Assumption}
\newtheorem{proposition}{Proposition}
\newtheorem{lemma}{Lemma}
\newtheorem{corollary}{Corollary}
\title{A General Theory for Client Sampling in Federated Learning}
\author{
Yann Fraboni$^{1, 2}$
\and
Richard Vidal$^2$\and
Laetitia Kameni$^{2}$\And
Marco Lorenzi$^1$
\affiliations
$^1$Universit\'e C\^{o}te d’Azur, Inria Sophia Antipolis,
	Epione Research Group, France\\
$^2$Accenture Labs, Sophia Antipolis, France\\
\thanks{Accepted to the International Workshop on Trustworthy Federated Learning in Conjunction with IJCAI 2022.}
}
\begin{document}

\maketitle

\begin{abstract}

While client sampling is a central operation of current state-of-the-art federated learning (FL) approaches, the impact of this procedure on the convergence and speed of FL remains under-investigated. In this work, we provide a general theoretical framework to quantify the impact of a client sampling scheme and of the clients heterogeneity on the federated optimization. First, we provide a unified theoretical ground for previously reported sampling schemes experimental results on the relationship between FL convergence and the variance of the aggregation weights. Second, we prove for the first time that the quality of FL convergence is also impacted by the resulting \emph{covariance} between aggregation weights. Our theory is general, and is here applied to Multinomial Distribution (MD) and Uniform sampling, two default unbiased client sampling schemes of FL, and demonstrated through a series of experiments in non-iid and unbalanced scenarios. Our results suggest that MD sampling should be used as default sampling scheme, due to the resilience to the changes in data ratio during the learning process, while Uniform sampling is superior only in the special case when clients have the same amount of data.
	

\end{abstract}

\section{Introduction}

Federated Learning (FL) has gained popularity in the last years as it enables different clients to jointly learn a global model without sharing their respective data. 
Among the different FL approaches, federated averaging (\textsc{FedAvg}) has emerged as the most popular optimization scheme \citep{FedAvg}. An optimization round of \textsc{FedAvg} requires data owners, also called clients, to receive from the server the current global model which they update on a fixed amount of Stochastic Gradient Descent (SGD) steps before sending it back to the server. The new global model is then created as the weighted average of the client updates, according to their data ratio.
FL specializes the classical problem of distributed learning (DL), to account for the private nature of clients information (i.e. data and surrogate features), and for the potential data and hardware heterogeneity across clients, which is generally unknown to the server.

In FL optimization, \textsc{FedAvg} was first proven to converge experimentally \citep{FedAvg}, before theoretical guarantees were provided for any non-iid federated dataset \citep{FedNova, SCAFFOLD, haddadpour2019convergence, Khaled2020}. A drawback of naive implementations of \textsc{FedAvg} consists in requiring the participation of all the clients to every optimization round. As a consequence, the efficiency of the optimization is limited by the communication speed of the slowest client, as well as by the server communication capabilities.  To mitigate this issue, the original \textsc{FedAvg} algorithm already contemplated the possibility of considering a random subset of $m$ clients at each FL round. It has been subsequently shown that, to ensure the convergence of FL to its optimum, clients must be sampled such that in expectation the resulting global model is identical to the one obtained when considering all the clients \citep{FedNova, PowerOfChoice}. Clients sampling schemes compliant with this requirement are thus called \textit{unbiased}. Due to its simplicity and flexibility, the current default unbiased sampling scheme consists in sampling $m$ clients according to a Multinomial Distribution (MD), where the sampling probability  depends on the respective data ratio \citep{FedProx, FedNova, OnTheConvergence, haddadpour2019convergence, Li2020Fair, Wang2018Cooperative, ClusteredSampling}. Nevertheless, when clients have identical amount of data, clients can also be sampled uniformly without replacement \citep{OnTheConvergence, SCAFFOLD, reddi2021adaptive, DynamicFL}. In this case, Uniform sampling has been experimentally shown to yield better results than MD sampling \citep{OnTheConvergence}.

Previous works proposed unbiased sampling strategies alternative to MD and Uniform sampling with the aim of improving FL convergence. 
In \cite{ClusteredSampling}, MD sampling was extended to account for clusters of clients with similar data characteristics, while in \cite{Richtarik_optimal_sampling}, clients sampling probabilities are defined depending on the Euclidean norm of the clients local work.  
While these works are based on the definition and analysis of specific sampling procedures, aimed at satisfying a given FL criterion, there is currently a need for a general theoretical framework to elucidate the impact of client sampling on FL convergence.  

The main contribution of this work consists in deriving a general theoretical framework for FL optimization allowing to clearly quantify the impact of client sampling on the global model update at any FL round. This contribution has important theoretical and practical implications. First, we  demonstrate the dependence of FL convergence on the variance of the aggregation weights. Second, we prove for the first time that the convergence speed is also impacted through sampling by the resulting \emph{covariance} between aggregation weights. 
From a practical point of view, we establish both theoretically and experimentally that client sampling schemes based on aggregation weights with sum different than 1 are less efficient. We also prove that MD sampling is outperformed by Uniform sampling only when clients have identical data ratio. Finally, we show that the comparison between different client sampling schemes is appropriate only when considering a small number of clients. 
Our theory ultimately shows that MD sampling should be used as default sampling scheme, due to the favorable statistical properties and to the resilience to FL applications with varying data ratio and heterogeneity.

Our work is structured as follows. 
In Section \ref{sec:related}, we provide formal definitions for FL, unbiased client sampling, and for the server aggregation scheme. 
In Section \ref{sec:convergence}, we introduce our convergence guarantees (Theorem \ref{theo:convergence_paper}) relating the convergence of FL to the aggregation weight variance of the client sampling scheme.
Consistently with our theory, in Section \ref{sec:experiments}, we experimentally demonstrate the importance of the clients aggregation weights variance and covariance on the convergence speed, and conclude by recommending Uniform sampling for FL applications with identical client ratio, and MD sampling otherwise.

\section{Background}
\label{sec:related}

Before investigating in Section \ref{sec:convergence} the impact of client sampling  on FL convergence, we recapitulate in Section \ref{sec:related} the current theory behind FL aggregation schemes for clients local updates. We then introduce a formalization for \textit{unbiased} client sampling.

\subsection{Aggregating clients local updates}

In FL, we consider a set $I$ of $n$ clients each respectively owning a dataset $\Dcal_i$ composed of $n_i$ samples. 
FL aims at optimizing the average of each clients local loss function weighted by $p_i$ such that $\sum_{i=1}^{n} p_i = 1$, i.e.
\begin{equation}
\label{eq:global_loss}
	\Lcal(\theta) = \sum_{i=1}^n p_i \Lcal_i(\theta),
\end{equation}
where $\theta$ represents the model parameters. The weight $p_i$ can be interpreted as the importance given by the server to client $i$ in the federated optimization problem. While any combination of $\{p_i\}$ is possible, we note that in practice, either (a) every device has equal importance, i.e. $p_i = 1/n$, or (b) every data point is equally important, i.e. $p_i = n_i /M$ with $M=\sum_{i=1}^n n_i$. Unless stated otherwise, in the rest of this work, we consider to be in case (b), i.e. $\exists i, \ p_i \neq 1/n$.

In this setting, to estimate a global model across clients, \textsc{FedAvg} \citep{FedAvg} is an iterative training strategy based on the aggregation of local model parameters. At each iteration step $t$, the server sends the current global model parameters $\theta^t$ to the clients. Each client updates the respective model by minimizing the local cost function $\Lcal_i(\theta)$ through a fixed amount $K$ of SGD steps initialized with $\theta^t$. Subsequently each client returns the updated local parameters ${\theta}_i^{t+1}$ to the server. The global model parameters $\theta^{t+1}$ at the iteration step $t+1$ are then estimated as a weighted average:
\begin{equation}
\label{eq:FedAvg_server_aggregation}
\theta^{t+1}=\sum_{i=1}^np_i{\theta}_i^{t+1}.
\end{equation}

To alleviate the clients workload and reduce the amount of overall communications, the server often considers $m \le n$ clients at every iteration. In heterogeneous datasets containing many workers, the percentage of sampled clients $m/n$ can be small, and thus induce important variability in the new global model, as each FL optimization step necessarily leads to an improvement on the $m$ sampled clients to the detriment of the non-sampled ones. To solve this issue, \citet{reddi2021adaptive, SCAFFOLD, Wang2020SlowMo} propose considering an additional learning rate $\eta_g$ to better account for the clients update at a given iteration. We denote by $\omega_i(S_t)$ the stochastic aggregation weight of client $i$ given the subset of sampled clients $S_t$ at iteration $t$ .
The server aggregation scheme can be written as:
\begin{equation}
\label{eq:aggreg_SCAFFOLD_general}
\theta^{t+1}
= \theta^t + \eta_g \sum_{i =1}^n \omega_i(S_t)(\theta_i^{t+1} - \theta^t)
.
\end{equation}

\subsection{Unbiased data agnostic client samplings}

\begin{table*}[ht]
	\caption{Synthesis of statistical properties of different sampling schemes. }
	\label{table:sampling}

	\begin{center}
	\begin{small}
	\begin{sc}
	\begin{tabular}{llll}
		\toprule
		Sampling  
		& $\VAR{\omega_i(S_t)}$    
		& $\alpha$    
		& $\VAR{\sum_{i=1}^{n}\omega_i(S_t)}$ \\
		\midrule
		
		Full participation  
		& $= 0$   
		& $= 0$    
		& $=0$     \\
		
		MD  
		& $ = - \frac{1}{m} p_i^2 + \frac{1}{m} p_i$
		& $= 1/m$    
		& $=0$     \\
		
		Uniform     
		& $ = \left(\frac{n}{m} - 1\right) p_i^2$
		& $ = \frac{n - m}{m(n-1)}$     
		& $= \frac{n-m}{m (n-1)} [n \sum_{i=1}^{n}p_i^2 -1]$ \\
		
		\bottomrule
	\end{tabular}
	\end{sc}
	\end{small}
	\end{center}
	\vskip -0.1in
\end{table*}

While \textsc{FedAvg}  was originally based on the uniform sampling of clients \citep{FedAvg}, this scheme has been proven to be biased and converge to a suboptimal minima of problem (\ref{eq:global_loss}) \citep{FedNova, PowerOfChoice, OnTheConvergence}. This was the motivation for \cite{OnTheConvergence} to introduce the notion of \textit{unbiasedness}, where clients are considered in expectation subject to their importance $p_i$, according to Definition \ref{def:sampling} below.
Unbiased sampling guarantees the optimization of the original FL cost function, while minimizing the number of active clients per FL round.
We note that unbiased sampling is not necessarily related to the clients distribution, as this would require to know beforehand the specificity of the clients' datasets. 

Unbiased sampling methods \citep{FedProx, OnTheConvergence, ClusteredSampling} are currently among the standard approaches to FL, as opposed to \textit{biased} approaches, known to over- or under-represent clients and lead to suboptimal convergence properties \citep{FedAvg, sampling_mobile_edge, optimal_user_selection, PowerOfChoice}, or to methods requiring additional computation work from clients \citep{Richtarik_optimal_sampling}. 

\begin{definition}[Unbiased Sampling]\label{def:sampling} 
	A client sampling scheme is said unbiased if the expected value of the client aggregation is equal to the global deterministic aggregation obtained when considering all the clients, i.e.
	\begin{equation}
	\EE{S_t}{\sum_{i=1}^n w_i(S_t) \theta_i^t}
	\coloneqq
	\sum_{i =1 }^n p_i\theta_i^t ,
	\end{equation}
	
	where $w_j(S_t)$ is the aggregation weight of client $j$ for subset of clients $S_t$.
\end{definition}

The sampling distribution uniquely defines the statistical properties of stochastic weights. In this setting, unbiased sampling guarantees the equivalence between deterministic and stochastic weights in expectation.
%
Unbiased schemes of primary importance in FL are MD and Uniform sampling, for which we can derive a close form formula for the aggregation weights :

\textbf{MD sampling}. This scheme considers $l_1, ..., l_m$ to be the $m$ iid sampled clients from a Multinomial Distribution with support on $\{1, ..., m\}$ satisfying $\mathbb{P}(l_k=i) = p_i$ \citep{FedNova, FedProx, OnTheConvergence, haddadpour2019convergence, Li2020Fair, Wang2018Cooperative, ClusteredSampling}. By definition, we have $\sum_{i=1}^{n}p_i =1$, and the clients aggregation weights take the form:
\begin{equation}
\omega_i(S_t) = \frac{1}{m} \sum_{k =1 }^m \mathbb{I}(l_k=i) .
\label{eq:MD_aggreg_weight}
\end{equation}

\textbf{Uniform sampling}. 
This scheme samples $m$ clients uniformly without replacement. Since in this case a client is sampled with probability $ p(\{i \in S_t\}) = m/n$, the requirement of Definition \ref{def:sampling} implies:
\begin{equation}
\omega_i(S_t) 
=  \mathbb{I}( i \in S_t ) \frac{n}{m}p_i.
\label{eq:Uniform_aggreg_weight}
\end{equation}

We note that this formulation for Uniform sampling is a generalization of the scheme previously used for FL applications with identical client importance, i.e. $p_i = 1 / n$ \citep{SCAFFOLD, OnTheConvergence, reddi2021adaptive, DynamicFL}. We note that $\VAR{\sum_{i=1}^{n}\omega_i(S_t)} = 0$ if and only if $p_i = 1/n$ for all the clients as, indeed, $\sum_{i=1}^{n}\omega_i(S_t) = m \frac{n}{m} \frac{1}{n} = 1$

With reference to equation (\ref{eq:aggreg_SCAFFOLD_general}), we note that by setting $\eta_g = 1 $, and by imposing the condition $\forall S_t,\ \sum_{i=1}^{n}\omega_i(S_t) = 1$, we retrieve equation (\ref{eq:FedAvg_server_aggregation}). 
This condition is satisfied for example by MD sampling and Uniform sampling for identical clients importance. 

We finally note that the covariance of the aggregation weights for both MD and Uniform sampling satisfies Assumption \ref{ass:covariance}.
\begin{assumption}[Client Sampling Covariance]
    \label{ass:covariance}
    There exists a constant $\alpha$ such that the client sampling covariance satisfies $\forall i\neq j,\  \COV{\omega_i(S_t)}{\omega_j(S_t)} = - \alpha p_i p_j$.
\end{assumption}
We provide in Table \ref{table:sampling} the derivation of $\alpha$ and the resulting covariance for these two schemes with calculus detailed in Appendix \ref{app:sec:sampling_calculus}. Furthermore, this property is common to a variety of sampling schemes, for example based on Binomial or Poisson Binomial distributions (detailed derivations can be found in Appendix \ref{app:sec:sampling_calculus}). Following this consideration, in addition to  Definition \ref{def:sampling}, in the rest of this work we assume the additional requirement for a client sampling scheme to satisfy Assumption \ref{ass:covariance}.

\subsection{Advanced client sampling techniques}

Importance sampling for centralized SGD  \cite{ZhaoImportance,JMLR:Richtarik} has been developed to reduce the variance of the gradient estimator in the centralized setting and provide faster convergence. According to this framework, each data point is sampled according to a probability based on a parameter of its loss function (e.g. its Lipschitz constant), in opposition to classical sampling where clients are sampled with same probability. 
These works cannot be seamlessly applied in FL, since in general no information on the clients loss function should be disclosed to the server. Therefore, the operation of client sampling in FL cannot be seen as an extension of importance sampling.
Regarding advanced FL client sampling, \cite{ClusteredSampling} extended MD sampling to account for collections of sampling distributions with varying client sampling probability. From a theoretical perspective, this approach was proven to have identical convergence guarantees of MD sampling, with albeit experimental improvement justified by  lower variance of the clients' aggregation weights.
In \cite{Richtarik_optimal_sampling}, clients probability are set based on the euclidean norm of the clients local work.
We show in Appendix \ref{app:sec:sampling_calculus} that these advanced client sampling strategies also satisfy our covariance assumption \ref{ass:covariance}, and are thus encompassed by the general theory developed in Section \ref{sec:convergence}.  

\section{Convergence Guarantees}\label{sec:convergence}

Based on the assumptions introduced in Section \ref{sec:related}, in what follows we elaborate a new theory relating the convergence of FL to the statistical properties of client sampling schemes. In particular, Theorem \ref{theo:convergence_paper} quantifies the asymptotic relationship between client sampling and FL convergence.

\subsection{Asymptotic FL convergence with respect to client sampling}\label{sec:convergence_all_steps}

To prove FL convergence with client sampling, our work relies on the following three assumptions 
\citep{FedNova, FedProx, SCAFFOLD, haddadpour2019convergence, Matcha, Wang2019adaptive}:
\begin{assumption}[Smoothness]\label{ass:smoothness}
	The clients local objective function is $L$-Lipschitz smooth, that is, $\forall i \in \{1, ..., n\},\ \norm{\nabla \Lcal_i(x) - \nabla \Lcal_i(y)} \le L \norm{x - y}$.
\end{assumption}


\begin{assumption}[Bounded Dissimilarity ]
	\label{ass:dissimilarity} 
	There exist constants $\beta^2\ge 1$ and $\kappa^2 \ge 0$ such that for every combination of positive weights $\{w_i\}$ such that $\sum_{i=1}^{n}w_i = 1$, we have $\sum_{i=1}^{n} w_i \norm{\nabla \Lcal_i(x)}^2 \le \beta^2 \norm{\nabla \Lcal(x)}^2 + \kappa^2$. If all the local loss functions are identical, then we have $\beta^2 =1$ and $\kappa^2 =0$. 
\end{assumption}

\begin{assumption}[Unbiased Gradient and Bounded Variance]\label{ass:unbiased}
	Every client stochastic gradient $g_i(\vx|B)$ of a model $\vx$ evaluated on batch $B$ is an unbiased estimator of the local gradient. We thus have 
	$ \EE{B}{\vxi_i(B)} = 0$ and $0 \leq\EE{B}{\norm{\vxi_i(B)}^2} \le \sigma^2$, with $\vxi_i(B) = g_i(\vx|B) - \nabla \Lcal_i(\vx)$. 
\end{assumption}

We formalize in the following theorem the relationship between the statistical properties of the client sampling scheme and the asymptotic convergence of FL  (proof in Appendix \ref{app:sec:convergence_Wang}). 

\begin{theorem}[FL convergence]\label{theo:convergence_paper}
Let us consider a client sampling scheme satisfying Definition \ref{def:sampling} and Assumption \ref{ass:covariance}. Under Assumptions  \ref{ass:smoothness}, \ref{ass:dissimilarity}, and \ref{ass:unbiased}, and with sufficiently small local step size $\eta_l$, the following convergence bound holds:
	\begin{align}
	&\frac{1}{T}\sum_{t=0}^{T-1} \E{\norm{\nabla {\Lcal}(\vtheta^t)}^2}
	\le \Ocal \left( \frac{1}{\tilde{\eta} K T} \right)
	\nonumber\\
	&+ \Ocal \left(  \eta_l^2 (K-1) \sigma^2 \right)
	+  \Ocal \left( \tilde{\eta}  \left[ \Sigma + \sum_{i=1}^{n} p_i^2\right] \sigma^2 \right)
	\label{eq:theo_convergence}
	\\
	&+ \Ocal \left( \eta_l^2 K(K-1) \kappa^2 \right)
	+ \Ocal \left( \tilde{\eta} \gamma \left[(K-1)\sigma^2 + K \kappa^2 \right] \right)
	\nonumber
	,
	\end{align}
	where $\tilde{\eta} = \eta_g \eta_l$, $K$ is the number of local SGD,
	\begin{equation}
		\Sigma  
		= \sum_{i=1}^{n} \VAR{\omega_i(S_t)}
    \end{equation}
    and
    \begin{equation}
		\gamma 
		= \sum_{i=1}^{n}\VAR{\omega_i(S_t)} + \alpha \sum_{i=1}^{n} p_i^2 
		.
	\end{equation}
\end{theorem}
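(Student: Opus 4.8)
The plan is to run the standard one-round descent argument for \textsc{FedAvg} with partial participation (in the spirit of \citep{FedNova}), modifying only the step that handles the randomness of the aggregation weights so as to exploit the exact first- and second-moment structure imposed by Definition \ref{def:sampling} and Assumption \ref{ass:covariance}. Write $\tilde\eta=\eta_g\eta_l$, let $\theta_i^{t,k}$ denote the $k$-th local SGD iterate of client $i$ in round $t$ (so $\theta_i^{t,0}=\theta^t$), and set $G_i\coloneqq\sum_{k=0}^{K-1}g_i(\theta_i^{t,k})$, so that by \eqref{eq:aggreg_SCAFFOLD_general} the server step is $\theta^{t+1}-\theta^t=-\tilde\eta\sum_i\omega_i(S_t)\,G_i$. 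Since $\Lcal=\sum_ip_i\Lcal_i$ is a convex combination of $L$-smooth functions it is $L$-smooth, so $\Lcal(\theta^{t+1})\le\Lcal(\theta^t)+\inner{\nabla\Lcal(\theta^t)}{\theta^{t+1}-\theta^t}+\tfrac{L}{2}\norm{\theta^{t+1}-\theta^t}^2$. I would take the expectation over round $t$ (the draw of $S_t$ and of the mini-batches), bound the two right-hand terms, sum over $t=0,\dots,T-1$, telescope $\sum_t\E{\Lcal(\theta^t)-\Lcal(\theta^{t+1})}\le\Lcal(\theta^0)-\inf\Lcal$, and divide by $\tilde\eta KT$; this already accounts for the $\Ocal(1/(\tilde\eta KT))$ term.

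\emph{The descent term.} Using $\E{\omega_i(S_t)}=p_i$, Assumption \ref{ass:unbiased}, and $\sum_ip_i\nabla\Lcal_i(\theta^t)=\nabla\Lcal(\theta^t)$, the expected inner product equals $-\tilde\eta\sum_{k=0}^{K-1}\inner{\nabla\Lcal(\theta^t)}{\sum_ip_i\E{\nabla\Lcal_i(\theta_i^{t,k})}}$. Expanding each summand with $2\inner{a}{b}=\norm a^2+\norm b^2-\norm{a-b}^2$ and bounding $\norm{\sum_ip_i(\nabla\Lcal_i(\theta^t)-\nabla\Lcal_i(\theta_i^{t,k}))}^2\le L^2\sum_ip_i\norm{\theta^t-\theta_i^{t,k}}^2$ (Assumption \ref{ass:smoothness}), one obtains a leading negative term of order $-\tilde\eta K\norm{\nabla\Lcal(\theta^t)}^2$, a nonpositive leftover term kept for a later cancellation, and a client-drift remainder controlled by $\E{\norm{\theta_i^{t,k}-\theta^t}^2}$.

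\emph{The quadratic term (key step).} Here the sampling statistics enter, and the crucial observation is the exact identity, valid for any deterministic $V_1,\dots,V_n$ and any sampling scheme obeying Definition \ref{def:sampling} and Assumption \ref{ass:covariance},
\begin{equation}
\EE{S_t}{\norm{\sum_i\omega_i(S_t)\,V_i}^2}=(1-\alpha)\norm{\sum_ip_iV_i}^2+\sum_i\big(\VAR{\omega_i(S_t)}+\alpha p_i^2\big)\norm{V_i}^2,
\end{equation}
obtained by expanding the square, substituting $\E{\omega_i(S_t)^2}=\VAR{\omega_i(S_t)}+p_i^2$ and $\E{\omega_i(S_t)\omega_j(S_t)}=(1-\alpha)p_ip_j$ for $i\neq j$, and completing the square via $\sum_{i\neq j}p_ip_j\inner{V_i}{V_j}=\norm{\sum_ip_iV_i}^2-\sum_ip_i^2\norm{V_i}^2$. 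Since $S_t$ is drawn independently of the mini-batches (client sampling is data-agnostic), the mini-batches of distinct clients are independent, and $\omega_i(S_t)G_i$ vanishes for non-sampled clients so all $G_i$ may be treated as computed on independent mini-batches, applying this identity conditionally on the mini-batches and then integrating — using $\E{\norm{G_i}^2}=\norm{\E{G_i}}^2+\VAR{G_i}$ with $\VAR{G_i}\coloneqq\E{\norm{G_i-\E{G_i}}^2}$ and $\VAR{\sum_ip_iG_i}=\sum_ip_i^2\VAR{G_i}$ by client independence — gives the exact decomposition
\begin{align}
\E{\norm{\sum_i\omega_i(S_t)G_i}^2}&=(1-\alpha)\norm{\sum_ip_i\E{G_i}}^2+\sum_i\big(\VAR{\omega_i(S_t)}+\alpha p_i^2\big)\norm{\E{G_i}}^2 \nonumber\\
&\quad+\sum_i\big(\VAR{\omega_i(S_t)}+p_i^2\big)\VAR{G_i}. \nonumber
\end{align}
The coefficients of the middle sum add to $\gamma$, those of the last sum add to $\Sigma+\sum_ip_i^2$, and $1-\alpha\le1$: this is precisely the mechanism by which $\Sigma$, $\gamma$ and $\sum_ip_i^2$ surface in \eqref{eq:theo_convergence}. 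The first term, $(1-\alpha)\norm{\sum_ip_i\E{G_i}}^2$, which by Cauchy--Schwarz over $k$ is at most $(1-\alpha)K\sum_k\norm{\sum_ip_i\E{\nabla\Lcal_i(\theta_i^{t,k})}}^2$, is absorbed by the nonpositive leftover of the descent term as soon as $\eta_l$ is small enough that $L\tilde\eta(1-\alpha)K\le1$.

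\emph{Drift, bounded dissimilarity, and assembling.} It then remains to: (i) bound $\VAR{G_i}\le\Ocal(K\sigma^2)+\Ocal\big(K^2L^2\max_k\E{\norm{\theta_i^{t,k}-\theta^t}^2}\big)$ by splitting each stochastic gradient into its mean and a conditionally zero-mean noise of second moment at most $\sigma^2$ (Assumptions \ref{ass:smoothness}, \ref{ass:unbiased}); (ii) bound $\sum_i(\VAR{\omega_i(S_t)}+\alpha p_i^2)\norm{\E{G_i}}^2$ through Assumption \ref{ass:dissimilarity} — this is where $\alpha\ge0$ (true for MD and Uniform, cf.\ Table \ref{table:sampling}) is used, since it makes the weights $(\VAR{\omega_i(S_t)}+\alpha p_i^2)/\gamma$ nonnegative and summing to $1$, so that after a Jensen step over $k$ and $\norm{\nabla\Lcal(\theta_i^{t,k})}^2\le2\norm{\nabla\Lcal(\theta^t)}^2+2L^2\norm{\theta_i^{t,k}-\theta^t}^2$ this sum is at most $\gamma$ times a multiple of $K^2\kappa^2$, $K\norm{\nabla\Lcal(\theta^t)}^2$, and the drift; and (iii) close the one-round drift recursion, $\E{\norm{\theta_i^{t,k}-\theta^t}^2}\le\Ocal\big(\eta_l^2k^2(\beta^2\norm{\nabla\Lcal(\theta^t)}^2+\kappa^2)+\eta_l^2k\sigma^2\big)$, by unrolling $k\le K$ local SGD steps. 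Plugging (i)--(iii) back, telescoping, dividing by $\tilde\eta KT$, and taking $\eta_l$ small enough that the positive $\norm{\nabla\Lcal(\theta^t)}^2$ contributions (of order $\tilde\eta\eta_l^2K^2L^2$ from the drift and $L\tilde\eta\gamma\beta^2$ from the middle sum) are dominated by the order-$\tilde\eta K$ descent and that factors such as $\eta_l^2L^2K^2$ are at most $1$, collapses the remaining terms into the five $\Ocal(\cdot)$ contributions of \eqref{eq:theo_convergence}. The main obstacle is not the identity above but the bookkeeping around it: feeding the drift recursion into every term while tracking which error acquires which of the constants $\Sigma$, $\gamma$, $\sum_ip_i^2$, and pinning down the exact powers of $K$ and $\eta_l$ needed in the step-size condition; care is also required to verify the normalization and nonnegativity of the dissimilarity weights before invoking Assumption \ref{ass:dissimilarity}.
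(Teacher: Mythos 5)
Your proposal is correct and follows essentially the same route as the paper: the one-round smoothness descent, the polarization identity $2\inner{a}{b}=\norm{a}^2+\norm{b}^2-\norm{a-b}^2$ for the linear term, and — as the key step — exactly the variance/covariance decomposition $\EE{S_t}{\norm{\sum_i\omega_i V_i}^2}=(1-\alpha)\norm{\sum_ip_iV_i}^2+\sum_i(\VAR{\omega_i}+\alpha p_i^2)\norm{V_i}^2$, which is the paper's Lemma \ref{lem:decompo_Xi}, followed by the same drift recursion, bounded-dissimilarity step (including the same care about $\gamma_i\ge 0$ for normalizing the weights), and step-size conditions. The only cosmetic differences are that the paper works with the surrogate loss $\tilde{\Lcal}$ and general per-step coefficients $a_{i,k}$ before specializing to \textsc{FedAvg}, and that it strips the gradient noise from $\vd_i^t$ before applying the sampling lemma rather than after; both orderings produce the same coefficients $\Sigma+\sum_ip_i^2$, $\gamma$, and $1-\alpha$.
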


We first observe that any client sampling scheme satisfying the assumptions of Theorem \ref{theo:convergence_paper} converges to its optimum. Through $\Sigma$ and $\gamma$, equation (\ref{eq:theo_convergence}) shows that our bound is proportional to the clients aggregation weights through the quantities $\VAR{\omega_i(S_t)}$ and $\alpha$,
which thus should be minimized. These terms are non-negative and are minimized and equal to zero only with full participation of the clients to every optimization round.
Theorem \ref{theo:convergence_paper} does not require the sum of the weights $\omega_i(S_t)$ to be equal to 1. Yet, for client sampling satisfying 
$\VAR{\sum_{i=1}^{n}\omega_i(S_t)} = 0$, we get $\alpha \propto \Sigma$.
Hence, choosing an optimal client sampling scheme amounts at choosing the client sampling with the smallest $\Sigma$. This aspect has been already suggested in \cite{ClusteredSampling}.

The convergence guarantee proposed in Theorem \ref{theo:convergence_paper} extends the work of \cite{FedNova} where, in addition of considering \textsc{FedAvg} with clients performing $K$ vanilla SGD, we include a server learning rate $\eta_g$ and integrate client sampling (equation (\ref{eq:aggreg_SCAFFOLD_general})). With full client participation ($\Sigma = \gamma = 0$) and $\eta_g=1$, we retrieve the convergence guarantees of \cite{FedNova}. 
Furthermore, our theoretical framework can be applied to any client sampling satisfying the conditions of Theorem \ref{theo:convergence_paper}. In turn, Theorem \ref{theo:convergence_paper} holds for full client participation, MD sampling, Uniform sampling, as well as for the other client sampling schemes detailed in Appendix \ref{app:sec:sampling_calculus}. 
Finally, the proof of Theorem \ref{theo:convergence_paper} is general enough to account for FL regularization methods \citep{FedProx, FedDane, FedDyn}, other SGD solvers \citep{Adam, AdaGrad,pmlr-v89-li19c}, and/or gradient compression/quantization \citep{FedPaq, QSparse, Atomo}. For all these applications, the conclusions drawn for client samplings satisfying the assumptions of Theorem \ref{theo:convergence_paper} still hold.

\begin{figure*}
	\includegraphics[width = \textwidth]{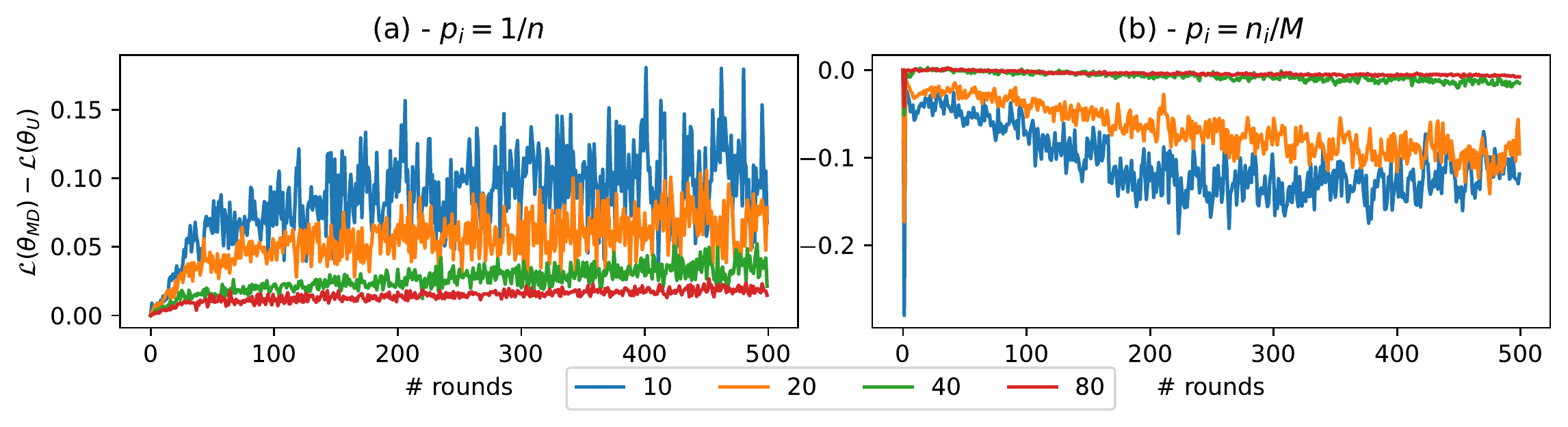}
	\caption{Difference between the convergence of the global losses resulting from MD and Uniform sampling when considering $n \in \{10, 20, 40, 80 \}$ clients and sampling $m = n / 2$ of them. In (a), clients have identical importance, i.e. $p_i = 1/n$. In (b), clients importance is proportional to their amount of data, i.e. $p_i = n_i / M$. Differences in global losses are averaged across 30 FL experiments with different model initialization (global losses are provided in Figure \ref{fig:shakespeare_evo_n}). }
	\label{fig:shakespeare}
\end{figure*}

\subsection{Application to current client sampling schemes}

\textbf{MD sampling}. When using Table \ref{table:sampling} to compute $\Sigma$ and $\gamma$ close-form we obtain:
\begin{equation}
\Sigma_{MD}
= \frac{1}{m} \left[1- \sum_{i=1}^{n}p_i^2\right]
\text{ and }
\gamma_{MD}
= \frac{1}{m}
,
\end{equation}
where we notice that $\Sigma_{MD} \le \frac{1}{m} = \gamma_{MD}$. Therefore, one can obtain looser convergence guarantees than the ones of Theorem \ref{theo:convergence_paper}, independently from the amount of participating clients $n$ and set of clients importance $\{p_i\}$, while being inversely proportional to the amount of sampled clients $m$. The resulting bound shows that FL with MD sampling converges to its optimum for any FL application.

\textbf{Uniform sampling}. Contrarily to MD sampling, the stochastic aggregation weights of Uniform sampling do not sum to 1. As a result, we can provide FL scenarios diverging when coupled with Uniform sampling. Indeed, using Table \ref{table:sampling} to compute $\Sigma$ and $\gamma$ close-form we obtain
\begin{equation}
\Sigma_U
= \left[\frac{n}{m} - 1\right]\sum_{i=1}^{n}p_i^2, 
\end{equation}
and
\begin{equation}
\gamma_{U} 
= \left[1 + \frac{1}{n-1}\right]\left[\frac{n}{m} - 1 \right] \sum_{i=1}^{n}p_i^2
,
\end{equation}
where we notice that $\gamma_{U} = \left[1 + \frac{1}{n-1}\right] \Sigma_{U}$. Considering that $\sum_{i=1}^{n}p_i^2 \le 1$, we have $\Sigma_{U} \le \frac{n}{m} - 1$, which goes to infinity for large cohorts of clients and thus prevents FL with Uniform sampling to converge to its optimum. 
Indeed, the condition $\sum_{i=1}^{n}p_i^2 \le 1$ accounts for every possible scenario of client importance  $\{p_i\}$, including the very heterogeneous ones. In the special case where $p_i = 1/n$, we have $\sum_{i=1}^{n}p_i^2 = 1/n$, such that $\Sigma_{U}$ is inversely proportional to both $n$ and $m$. Such FL applications converge to the optimum of equation (\ref{eq:global_loss}) for any configuration of $n$, $\{p_i\}$ and $m$.

Moreover, the comparison between the quantities $\Sigma$ and $\gamma$ for MD and Uniform sampling shows that Uniform sampling outperforms 
MD sampling when $p_i = 1/n$. More generally, Corollary \ref{cor:sufficient_conditions} provides sufficient conditions with Theorem \ref{theo:convergence_paper} for Uniform sampling to have better convergence guarantees  than MD sampling (proof in Appendix \ref{app:sec:sufficient_conditions}). 
\begin{corollary}\label{cor:sufficient_conditions}
	Uniform sampling has better convergence guarantees than MD sampling when $\Sigma_U \le \Sigma_{MD}$, and 
	$\gamma_{U} \le \gamma_{MD}$ which is equivalent to
	\begin{equation}
	\sum_{i=1}^{n}p_i^2 \le \frac{1}{n-m +1}
	.
	\end{equation}
\end{corollary}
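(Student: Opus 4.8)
The plan is to reduce the phrase ``better convergence guarantees'' to a pair of scalar inequalities between the sampling-dependent quantities appearing in Theorem~\ref{theo:convergence_paper}, and then to perform the elementary algebra that collapses that pair into the single stated condition.

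First I would note that in the right-hand side of \eqref{eq:theo_convergence} the only quantities that depend on the choice of sampling scheme are $\Sigma$ and $\gamma$: the terms $\sum_{i=1}^n p_i^2$, $K$, $\sigma^2$, $\kappa^2$, $\tilde{\eta}$ and $T$ are intrinsic to the FL problem and to the step sizes, and the hidden constants in the $\Ocal(\cdot)$ come from the proof and do not involve the sampling scheme. Since every coefficient multiplying $\Sigma$ or $\gamma$ is non-negative, the bound is, with all problem parameters and step sizes fixed, non-decreasing in $\Sigma$ and in $\gamma$. Hence it suffices to establish that, under the stated condition, one has $\Sigma_U \le \Sigma_{MD}$ and $\gamma_U \le \gamma_{MD}$ simultaneously; conversely, this conjunction is exactly what the corollary claims to be equivalent to the condition.

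Next I would substitute the closed forms derived just above the corollary. Writing $P \coloneqq \sum_{i=1}^n p_i^2$, we have $\Sigma_{MD} = \frac{1}{m}(1-P)$, $\gamma_{MD} = \frac{1}{m}$, $\Sigma_U = \left(\frac{n}{m}-1\right)P$ and $\gamma_U = \frac{n}{n-1}\left(\frac{n}{m}-1\right)P$. Multiplying through by the positive quantity $m$ (and disposing of the trivial case $m=n$, where $\Sigma_U=\gamma_U=0$ and the condition reads $P\le 1$), the inequality $\Sigma_U \le \Sigma_{MD}$ becomes $(n-m)P \le 1-P$, i.e.\ $(n-m+1)P \le 1$, which is precisely the stated condition $P \le \frac{1}{n-m+1}$; and $\gamma_U \le \gamma_{MD}$ becomes $\frac{n(n-m)}{n-1}P \le 1$, i.e.\ $P \le \frac{n-1}{n(n-m)}$.

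It then remains to show the second inequality is implied by the first, i.e.\ that $\frac{1}{n-m+1} \le \frac{n-1}{n(n-m)}$. Cross-multiplying, which is legitimate since both denominators are positive for $1 \le m < n$, this reduces to $n(n-m) \le (n-1)(n-m+1)$, i.e.\ to $0 \le m-1$, which holds since $m \ge 1$. Therefore the conjunction $\{\Sigma_U \le \Sigma_{MD}\}\cap\{\gamma_U \le \gamma_{MD}\}$ is equivalent to $P \le \frac{1}{n-m+1}$, and by the monotonicity observation this guarantees that the bound of Theorem~\ref{theo:convergence_paper} under Uniform sampling is at most that under MD sampling, which is the claim. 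I do not expect a genuine obstacle here; the only care needed is in the degenerate cases $m=n$ and $n=1$ where denominators such as $n-m$ or $n-1$ vanish, and in making explicit --- as above --- that it is the monotonicity of the bound in $\Sigma$ and $\gamma$ that licenses interpreting the scalar inequalities as ``better convergence guarantees''.
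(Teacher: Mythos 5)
Your proposal is correct and follows essentially the same route as the paper's own proof: substitute the closed forms of $\Sigma$ and $\gamma$ for the two schemes, reduce $\Sigma_U \le \Sigma_{MD}$ to $\sum_{i=1}^{n}p_i^2 \le \frac{1}{n-m+1}$ and $\gamma_U \le \gamma_{MD}$ to $\sum_{i=1}^{n}p_i^2 \le \frac{n-1}{n(n-m)}$, and observe that $\frac{1}{n-m+1} \le \frac{n-1}{n(n-m)}$ (equivalently $0 \le m-1$) so the first condition subsumes the second. Your added remarks on the monotonicity of the bound in $\Sigma$ and $\gamma$ and on the degenerate cases $m=n$, $n=1$ are sensible clarifications the paper leaves implicit, but they do not change the argument.
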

Corollary \ref{cor:sufficient_conditions} can be related to $\VAR{\sum_{i=1}^{n}\omega_i(S_t)}$, the variance for the sum of the aggregation weights, which is always null for MD sampling, and different of 0 for Uniform sampling except when $p_i = 1/n$ for all the clients. 

A last point of interest for the comparison between MD and Uniform sampling concerns the respective time complexity for selecting clients. Sampling with a Multinomial Distribution has time complexity $\Ocal(n + m \log(n))$, where $\Ocal(n)$ comes from building the probability density function to sample clients indices \citep{categorical_distribution}. This makes
MD sampling difficult to compute or even intractable for large cohorts of clients. 
On the contrary sampling $m$ elements without replacement from $n$ states is a reservoir sampling problem and takes time complexity $\Ocal(m(1 + \log(n/m))$\citep{ReservoirSampling}.
In practice, clients either receive identical importance ($p_i = 1/n$) or an importance proportional to their data ratio, for which we may assume computation $p_i = \Ocal(1/n)$. As a result, for important amount $n$ of participating clients, Uniform sampling should be used as the default client sampling due to its lower time complexity. However, for small amount of clients and heterogeneous client importance, MD sampling should be used by default.

Due to space constraints, we only consider in this manuscript applying Theorem \ref{theo:convergence_paper} to Uniform and MD sampling, which can also be applied to Binomial and Poisson Binomial sampling introduced in Section \ref{app:sec:sampling_calculus}, and satisfying our covariance assumption. 
To the best of our knowledge, we could only find \textit{Clustered sampling} introduced in \cite{ClusteredSampling} not satisfying this assumption. Still, with minor changes, we provide for this sampling scheme a similar bound to the one of Theorem \ref{theo:convergence_paper} (Appendix \ref{app:sec:clustered_convergence}), ultimately proving that clustered sampling improves MD sampling.

\section{Experiments on real data}\label{sec:experiments}

\begin{figure*}
	\includegraphics[width = \textwidth]{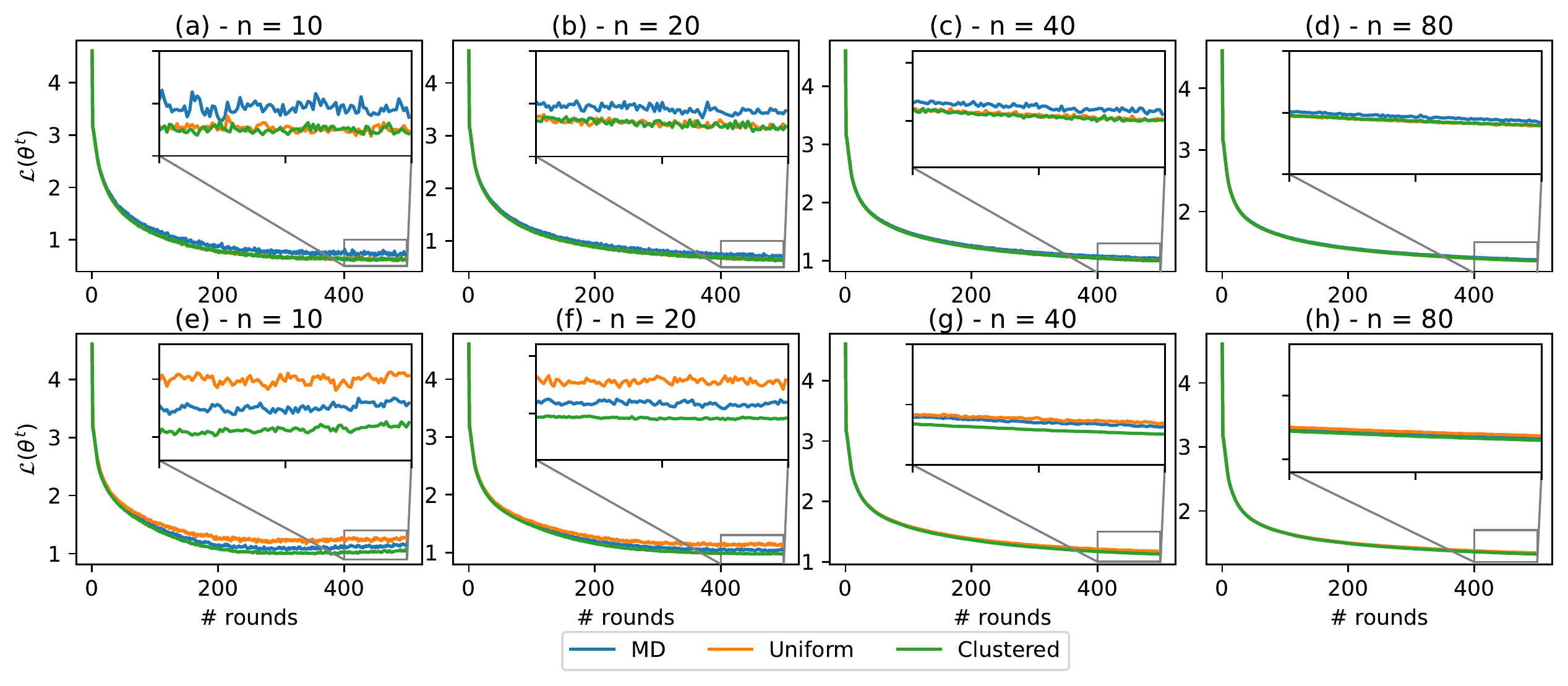}
	\caption{Convergence of the global losses for MD, Uniform, and Clustered sampling when considering $n \in \{10, 20, 40, 80 \}$ clients and sampling $m = n / 2$ of them. In (a-d), clients have identical importance, i.e. $p_i = 1/n$. In (e-h), clients importance is proportional to their amount of data, i.e. $p_i = n_i / M$. 
	Zoom of the global losses over the last 100 server aggregations and a variation of 0.5 in the global loss.
	}
	\label{fig:shakespeare_evo_n}
\end{figure*}

In this section, we provide an experimental demonstration of the convergence properties identified in Theorem \ref{theo:convergence_paper}. \footnote{Code and data are available at \url{https://github.com/Accenture/Labs-Federated-Learning/tree/impact_client_sampling}.}
We study a LSTM model for next character prediction on the dataset of \textit{The complete Works of William Shakespeare } \citep{FedAvg, Leaf}. We use a two-layer LSTM classifier containing 100 hidden units with an 8 dimensional embedding layer. The model takes as an input a sequence of 80 characters, embeds each of the characters into a learned 8-dimensional space and outputs one character per training sample after 2 LSTM layers and a fully connected one. 

When selected, a client performs $K=50$ SGD steps on batches of size $B=64$ with local learning rate $\eta_l = 1.5$. The server considers the clients local work with $\eta_g =1$.  We consider $n \in \{10,20,40,80\}$ clients, and sample half of them at each FL optimization step. 
While for sake of interpretability we do not apply a decay to local and global learning rates, we  note that our theory remains unchanged even in presence  of a learning rate decay.  In practice, for dataset with important heterogeneity, considering $\eta_g < 1$ can speed-up FL with a more stable convergence.

We compare the impact of MD, Uniform, and Clustered sampling, on the convergence speed of \textsc{FedAvg}.  
With Clustered sampling, the server selects $m$ clients from $m$ different clusters of clients created based on the clients importance \citep[Algorithm 1]{ClusteredSampling}. MD sampling is a special case of Clustered sampling, where every cluster is identical.

\textbf{Clients have identical importance }[$p_i = 1 /n$]. 
We note that Uniform sampling consistently outperforms MD sampling due to the lower covariance parameter, while the improvement between the resulting convergence speed is inversely proportional to the number of participating clients $n$ (Figure \ref{fig:shakespeare}a and Figure \ref{fig:shakespeare_evo_n}a-d). This result confirms the derivations of Section \ref{sec:convergence}.
Also, with Clustered sampling and identical client importance, every client only belongs to one cluster. 
Hence, Clustered sampling reduces to Uniform sampling and we retrieve identical convergence for both samplings (Figure \ref{fig:shakespeare_evo_n}a-d). This point was not raised in \cite{ClusteredSampling}.

\textbf{Clients importance depends on the respective data ratio } [$p_i = n_i / M$]. 
In this experimental scenario the aggregation weights for Uniform sampling do not always sum to 1, thus leading to the slow-down of FL convergence. Hence, we see in Figure \ref{fig:shakespeare}b that MD always outperforms Uniform sampling. This experiment shows that the impact on FL convergence of the variance of the sum of the stochastic aggregation weights is more relevant than the one due to the covariance parameter $\alpha$.  
We also retrieve in Figure \ref{fig:shakespeare_evo_n}e-h that Clustered sampling always outperform MD sampling, which confirms that for two client samplings with a null variance of the sum of the stochastic aggregation weights, the one with the lowest covariance parameter $\alpha$ converges faster. 
We also note that the slow-down induced by the variance is reduced when more clients do participate.
This is explained by the fact that the standard deviation of the clients data ratio is reduced with larger clients participation, e.g. $p_i = 1/10 \pm 0.13$ for $n=10$ and $p_i = 1/80 \pm 0.017$ for $n=80$.
We thus conclude that the difference between the effects of MD, Uniform, and Clustered sampling is mitigated with a large number of participating clients (Figure \ref{fig:shakespeare}b and Figure\ref{fig:shakespeare_evo_n}e-h). 

Additional experiments on Shakespeare are provided in Appendix \ref{app:sec:additional_experiments}. We show the influence of the amount of sampled clients $m$
and amount of local work $K$ 
on the convergence speed of MD and Uniform sampling.

Finally, additional experiments on CIFAR10 \citep{CIFAR-10} are provided in Appendix \ref{app:sec:additional_experiments}, where we replicate the experimental scenario previously proposed in \cite{ClusteredSampling}. In these applications, 100 clients are partitioned using a Dirichlet distribution which provides federated scenarios with different level of heterogeneity. For all the experimental scenarios considered, both results and conclusions are in agreement with those here derived for the Shakespeare dataset.

\section{Conclusion}

In this work, we highlight the asymptotic impact of client sampling on FL with Theorem \ref{theo:convergence_paper}, and shows that the convergence speed is inversely proportional to both the sum of the variance of the stochastic aggregation weights, and to their covariance parameter $\alpha$. Moreover, to the best of our knowledge, this work is the first one accounting for schemes where the sum of the weights is different from 1. 

Thanks to our theory, we investigated MD and Uniform sampling from both theoretical and experimental standpoints. We established that when clients have approximately identical importance, i.e $p_i = 1/n$, Uniform outperforms MD sampling, due to the larger impact of the covariance term for the latter scheme. On the contrary, Uniform sampling is outperformed by MD sampling in more general cases, due to the slowdown induced by its stochastic aggregation weights not always summing to 1. Yet, in practical scenario with very large number of clients, MD sampling may be unpractical, and Uniform sampling could be preferred due to the more advantageous time complexity. 

In this work, we also showed that our theory encompasses advanced FL sampling schemes, such as the one recently proposed in \cite{ClusteredSampling}, and \cite{Richtarik_optimal_sampling}.
Finally, while the contribution of this work is in the study of the impact of a client sampling on the global optimization objective, further extensions may focus on the analysis of the impact of clients selection method on individual users’ performance, especially in presence of heterogeneity.

\section* {Acknowledgements}

This work has been supported by the French government, through the 3IA Côte d’Azur Investments in the Future project managed by the National Research Agency (ANR) with the reference number ANR-19-P3IA-0002, and by the ANR JCJC project Fed-BioMed 19-CE45-0006-01. The project was also supported by Accenture.
The authors are grateful to the OPAL infrastructure from Université Côte d'Azur for providing resources and support.

\bibliographystyle{named}
\bibliography{biblio}

\newpage
\appendix
\onecolumn
\newpage

\section{Client Sampling Schemes Calculus}\label{app:sec:sampling_calculus}

In this section, we calculate for MD, Uniform, Poisson, and Binomial sampling the respective aggregation weight variance $\VAR{\omega_i(S_t)}$, the covariance parameter $\alpha$  such that $\COV{\omega_i(S_t))}{\omega_j(S_t)} = - \alpha p_i p_j$, and the variance of the sum of weights $\VAR{\sum_{i=1}^{n}\omega_i(S_t)}$.  We also propose statistics for the parameter $N$,  i.e. the amount of clients the server communicates with at an iteration:
\begin{equation}
N = \sum_{i=1}^{n}  \mathbb{I}( i\in S_t ).
\end{equation}

\subsection{Property \ref{prop:sum_weights}}\label{app:sec:properties}

\begin{proposition}\label{prop:sum_weights}
	For any client sampling, we have $0 \le \alpha \le 1$ and 
	\begin{equation}
		\VAR{\sum_{i=1}^n \omega_i(S_t)} 
		= \sum_{i=1}^n \VAR{\omega_i(S_t)}  - \alpha \left[ 1 - \sum_{i=1}^n   p_i^2\right]
		.
	\end{equation}
\end{proposition}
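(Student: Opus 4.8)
The plan is to treat the variance identity as a short computation and the bounds $0\le\alpha\le 1$ as consequences of unbiasedness together with the non-negativity of the aggregation weights. For the identity, I would expand the variance of the sum by bilinearity of the covariance,
\[
\VAR{\sum_{i=1}^n\omega_i(S_t)}=\sum_{i=1}^n\VAR{\omega_i(S_t)}+\sum_{i\neq j}\COV{\omega_i(S_t)}{\omega_j(S_t)},
\]
then use Assumption~\ref{ass:covariance} to rewrite each off-diagonal term as $-\alpha p_ip_j$, so the double sum equals $-\alpha\sum_{i\neq j}p_ip_j$. Since $\sum_{i=1}^np_i=1$, we have $\sum_{i\neq j}p_ip_j=\big(\sum_ip_i\big)^2-\sum_ip_i^2=1-\sum_ip_i^2$, which gives exactly the claimed identity. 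This step is routine.

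For the bounds, unbiasedness (Definition~\ref{def:sampling}) gives $\E{\omega_i(S_t)}=p_i$, so $\COV{\omega_i(S_t)}{\omega_j(S_t)}=\E{\omega_i(S_t)\omega_j(S_t)}-p_ip_j$; combined with Assumption~\ref{ass:covariance}, for any pair of clients with $p_i,p_j>0$,
\[
\alpha=1-\frac{\E{\omega_i(S_t)\omega_j(S_t)}}{p_ip_j}.
\]
Since the aggregation weights are non-negative, $\E{\omega_i(S_t)\omega_j(S_t)}\ge 0$, hence $\alpha\le 1$. (If some $p_j=0$ then $\omega_j(S_t)=0$ almost surely and $\alpha$ is read off any pair with positive importance.)

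The one substantive point is $\alpha\ge 0$, i.e. that sampling induces non-positive covariance between distinct aggregation weights, $\E{\omega_i(S_t)\omega_j(S_t)}\le p_ip_j$. For schemes whose aggregation weights sum to one almost surely — in particular MD sampling — this is immediate from the identity just derived: the left-hand side vanishes, so $\alpha\big(1-\sum_ip_i^2\big)=\sum_i\VAR{\omega_i(S_t)}\ge 0$ and hence $\alpha\ge 0$. For schemes whose weights need not sum to one (e.g. Uniform sampling), I would instead bound the co-selection probability directly: $\E{\omega_i(S_t)\omega_j(S_t)}$ is governed by $\mathbb{P}(i,j\in S_t)$, which for sampling without replacement and for the multinomial design is sub-multiplicative in the marginals, and the same elementary computation covers the Binomial and Poisson-Binomial schemes of Appendix~\ref{app:sec:sampling_calculus}. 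I expect this sign argument — rather than any algebra — to be the main obstacle, and I would either restrict ``client sampling'' to designs that genuinely select a sub-population (so that each $\omega_i(S_t)$ is zero with positive probability, yielding the required negative association) or establish $\alpha\ge 0$ scheme by scheme in line with the per-scheme derivations of Appendix~\ref{app:sec:sampling_calculus}.
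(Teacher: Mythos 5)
Your derivation of the identity and of $\alpha\le 1$ is exactly the paper's: expand $\VAR{\sum_i\omega_i(S_t)}$ by bilinearity, substitute $\COV{\omega_i(S_t)}{\omega_j(S_t)}=-\alpha p_ip_j$, use $\sum_{i\neq j}p_ip_j=1-\sum_ip_i^2$, and get $\alpha\le 1$ from $\COV{\omega_i(S_t)}{\omega_j(S_t)}=\E{\omega_i(S_t)\omega_j(S_t)}-p_ip_j\ge -p_ip_j$ by non-negativity of the weights. Where you go further is on $\alpha\ge 0$, and you are right to single it out: the paper's proof in Appendix~\ref{app:sec:properties} in fact only establishes $\alpha\le 1$ and the variance identity, and never argues the lower bound at all (the ``re-expressing $\alpha$'' step is just the identity rearranged, and $\sum_i\VAR{\omega_i(S_t)}-\VAR{\sum_i\omega_i(S_t)}$ is not obviously non-negative). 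Your observation that $\alpha\ge 0$ is not a formal consequence of unbiasedness plus Assumption~\ref{ass:covariance} is correct — e.g.\ a scheme that with probability $1/2$ sets all weights to $2p_i$ and otherwise to $0$ is unbiased, has covariance $+p_ip_j$, hence $\alpha=-1$ — so the bound genuinely requires either your sum-to-one argument (which handles MD and Clustered sampling), the independence of weights (Binomial, Poisson-Binomial, optimal sampling, where $\alpha=0$), or the sub-multiplicativity of co-selection probabilities for sampling without replacement (Uniform), exactly as verified scheme by scheme in Appendix~\ref{app:sec:sampling_calculus}. In short: your proof matches the paper's where the paper gives one, and correctly identifies and patches the one place where the paper's proof is silent.
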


\begin{proof}
	\textbf{Covariance parameter}
	\begin{equation}
	\COV{\omega_i(S_t)}{\omega_j(S_t)}
	= \E{\omega_i(S_t)\omega_j(S_t)} - p_i p_j
	\ge - p_i p_j
	.
	\end{equation}
    Hence, we have $\alpha \le 1.$
    
	\textbf{Aggregation Weights Sum}
	\begin{align}
	\VAR{\sum_{i=1}^{n}\omega_i(S_t)}
	&= \sum_{i=1}^{n}\VAR{\omega_i(S_t)}
	+ \sum_{i, j\neq i} \COV{\omega_i(S_t)}{\omega_j(S_t)}\\
	&= \sum_{i=1}^{n}\VAR{\omega_i(S_t)}
	- \alpha \sum_{i, j\neq i} p_i p_j\\
	&= \sum_{i=1}^{n}\left[ \VAR{\omega_i(S_t)}- \alpha p_i (1 - p_i)\right] \label{app:eq:V_1}\\
	&= \sum_{i=1}^n \VAR{\omega_i(S_t)}  - \alpha \left[ 1 - \sum_{i=1}^n   p_i^2\right]
	,
	\end{align}
	where we use $\sum_{i=1}^{n}p_i = 1$, equation (\ref{eq:global_loss}), for the third and fourth equality. 
	
	\textbf{Re-expressing $\alpha$}. Using equation (\ref{app:eq:V_1}), we get
	\begin{align}
	\VAR{\sum_{i=1}^{n}\omega_i(S_t)}
	&= \sum_{i=1}^{n} \VAR{\omega_i(S_t)} - \alpha \left[1 - \sum_{i=1}^{n} p_i^2\right],
	\end{align}
	which, with reordering, gives
	\begin{equation}
	\alpha
	= \frac{\sum_{i=1}^{n}\VAR{\omega_i(S_t)} - \VAR{ \sum_{i=1}^{n}  \omega_i(S_t)}}{1 - \sum_{i=1}^{n} p_i^2} .
	\end{equation}
	
\end{proof}

\subsection{No sampling scheme}
When every client participate at an optimization round, we have $\omega_i(S_t) = p_i$ which gives $\VARR{S_t}{\omega_i(S_t)} =0$, $\alpha = 0$, and $N = n$.

\subsection{MD sampling}\label{app:sec:MD}

We recall equation (\ref{eq:MD_aggreg_weight}),
\begin{equation}
\omega_i(S_t) = \frac{1}{m} \sum_{k =1 }^m  \mathbb{I}(l_k=i) 
,
\end{equation}
which gives
\begin{align}
\E{\omega_i(S_t)\omega_j(S_t)} 
&= \frac{1}{m^2} \sum_{k, l\neq k}\E{  \mathbb{I}(l_k=i)   \mathbb{I}(l_l=j) } 
+ \frac{1}{m^2} \sum_{k=1}^m\E{  \mathbb{I}(l_k=i)   \mathbb{I}(l_k=j) }\\
&= \frac{1}{m^2} \sum_{k, l\neq k}p_i p_j
+ \frac{1}{m^2} \sum_{k=1}^m\E{  \mathbb{I}(l_k=i)   \mathbb{I}(l_k=j)  }\\
&= \frac{m-1}{m} p_i p_j
+ \frac{1}{m} \E{  \mathbb{I}(l=i)   \mathbb{I}(l=j) }
%
\label{app:eq:R0}
\end{align}

\textbf{Variance}($i=j$). We get $\E{ \mathbb{I}(l=i)   \mathbb{I}(l=j) } = \E{ \mathbb{I}(l=i)   } = p_i$, which gives:
\begin{equation}
	\VAR{\omega_i(S_t)}
	= -\frac{1}{m} p_i^2
	+ \frac{1}{m} p_i
	\label{app:eq:R1}
\end{equation}

\textbf{Covariance}($i\neq j$). We get $\E{\mathbb{I}(l=i)   \mathbb{I}(l=j)} = 0$, which gives:
\begin{equation}
\COV{\omega_i(S_t)}{\omega_j(S_t)}
= -\frac{1}{m} p_i p_j
,
\end{equation}
and by definition we get
\begin{equation}
	\alpha = \frac{1}{m}
	\label{app:eq:R2}
\end{equation}

\textbf{Aggregation Weights Sum}.
Using equation (\ref{app:eq:R1})) and (\ref{app:eq:R2}) 
with Property \ref{prop:sum_weights}
, we get
\begin{equation}
	\VAR{\sum_{i=1}^{n}\omega_i(S_t)} 
	= 0.
	\label{app:eq:R3}
\end{equation}

\textbf{Amount of clients}. Considering that $p(i \in S_t) = 1 - p(i \notin S_t) = 1 - (1- p_i)^m$, we get:
\begin{equation}
\E{N} = \sum_{i=1}^{n}\mathbb{P}(i\in S_t) 
= n - \sum_{i=1}^{n}(1 - p_i)^m \le m
\end{equation}


\subsection{Uniform Sampling}

We recall equation (\ref{eq:Uniform_aggreg_weight}),
\begin{equation}
\omega_i(S_t) 
=  \mathbb{I}( i \in S_t ) \frac{n}{m}p_i.
\end{equation}

\textbf{Variance}. 
We first calculate the probability for a client to be sampled, i.e. 
\begin{align}
	\mathbb{P}(i \in S_t) = 1 - \mathbb{P}(i \notin S_t) = 1 - \frac{n-1}{n}... \frac{n-m}{n-m+1} = 1 - \frac{n-m}{n} = \frac{m}{n}.
	\label{app:eq:S1}
\end{align}

Using equation (\ref{app:eq:S1}), we have
\begin{equation}
\VARR{S_t}{\omega_i(S_t)} 
=\left[\frac{n}{m}p_i\right]^2\VAR{ \mathbb{I}( i \in S_t ) }
=\frac{n^2}{m^2}\frac{m}{n} (1- \frac{m}{n})p_i^2
=(\frac{n}{m}- 1)p_i^2
\label{app:eq:S2}
\end{equation}

\textbf{Covariance}. 
We have
\begin{align}
\mathbb{P}(\{i, j\} \in S_t) 
&= \mathbb{P}(i \in S_t) 
+ \mathbb{P}(j \in S_t) 
- \mathbb{P}(i \cup j \in S_t)\\
&= \mathbb{P}(i \in S_t) 
+ \mathbb{P}(j \in S_t) 
-  (1 - \mathbb{P}(\{i, j\} \notin S_t) ),
\label{app:eq:S3}
\end{align}
and 
\begin{equation}
	\mathbb{P}(\{i, j\} \notin S_t) 
	= \frac{n-2}{n}... \frac{n - m -1}{n - m +1} 
	= \frac{(n-m)(n-m -1)}{n(n-1)}.
\label{app:eq:S4}
\end{equation}

Substituting equation (\ref{app:eq:S1}) and (\ref{app:eq:S4}) in equation (\ref{app:eq:S3}) gives
\begin{align}
\mathbb{P}(\{i, j\} \in S_t) 
&= 2 \frac{m}{n}
-  1 
+ \frac{(n-m)(n-m -1)}{n(n-1)}\\
&= \frac{1}{n(n-1)} \left[2 m (n - 1) - n(n-1) + (n-m)(n-m -1)\right]\\
&= \frac{m(m-1)}{n(n-1)} 
.
\end{align}

Hence, we can express the aggregation weights covariance as
\begin{equation}
\COV{\omega_i(S_t)}{\omega_j(S_t)} 
= \frac{n^2}{m^2}\frac{m (m-1)}{n(n-1)}p_j p_k - p_j p_k
,
\end{equation}
which gives
\begin{equation}
\alpha = \frac{n - m}{m(n-1)}.
\label{app:eq:S5}
\end{equation}

\textbf{Aggregation Weights Sum}.
Combining equation (\ref{app:eq:S2}) and (\ref{app:eq:S5}) with Property \ref{prop:sum_weights} gives
\begin{align}
\VAR{\sum_{i=1}^{n}\omega_i(S_t)} 
= \sum_{i=1}^{n} \left[\frac{n}{m} - 1\right] p_i^2 
- \frac{n-m}{m(n-1)}\sum_{i=1}^{n}p_i(1-p_i)
= \frac{n-m}{m(n-1)}\left[n\sum_{i=1}^{n}p_i^2 -1\right],
\label{app:eq:S6}
\end{align}
where we retrieve $\VAR{\sum_{i=1}^{n}\omega_i(S_t)} = 0$ for identical client importance, i.e. $\sum_{i=1}^{n}p_i^2 = \frac{1}{n}$.

\textbf{Amount of Clients}. $N = m$.

\subsection{Poisson Binomial Distribution}

Clients are sampled according to a Bernoulli with a probability proportional to their importance $p_i$, i.e.
\begin{equation}
\omega_i(S_t) 
= \frac{1}{m}\mathbb{B}(mp_i).
\end{equation}
Hence, only $m \ge p_{max}^{-1}$ can be sampled and we retrieve $\E{\omega_i(S_t)} = \frac{1}{m} m p_i = p_i$. 

\textbf{Variance}. 
\begin{equation}
\VARR{S_t}{\omega_i(S_t)} 
=\frac{1}{m^2} m p_i (1- m p_i)
=\frac{1}{m} p_i (1- m p_i)
\end{equation}

\textbf{Covariance}. Due to the independence of each stochastic weight, we also get:
\begin{equation}
\COV{\omega_i(S_t)}{\omega_j(S_t)} = 0 
\end{equation}
\textbf{Aggregation Weights Sum}.
Using Property \ref{prop:sum_weights} we obtain
\begin{equation}
\VAR{\sum_{i=1}^{n}\omega_i(S_t)} 
= \frac{1}{m} - \sum_{i=1}^{n} p_i^2 
.
\end{equation}

\textbf{Amount of Clients}.
\begin{equation}
\E{N} = m 
\text{ and }
\VAR{N} = m - m^2 \sum_{i=1}^{n}p_i^2. 
\end{equation}

\subsection{Binomial Distribution}

Clients are sampled according to a Bernoulli with identical sampling probability, i.e.
\begin{equation}
\omega_i(S_t) 
= \frac{n}{m}\mathbb{B}(\frac{m}{n})p_i.
\end{equation}
Hence, we retrieve $\E{\omega_i(S_t)} = \frac{n}{m} \frac{m}{n} p_i = p_i$. 

\textbf{Variance}.
\begin{equation}
\VARR{S_t}{\omega_i(S_t)} 
=\frac{n^2}{m^2} \frac{m}{n}  (1- \frac{m}{n}) p_i^2
=\frac{n-m}{m} p_i^2 .
\end{equation}

\textbf{Covariance}.
Due to the independence of each stochastic weight, we have:
\begin{equation}
\COV{\omega_i(S_t)}{\omega_j(S_t)} = 0. 
\end{equation}

\textbf{Aggregation Weights Sum}.
Using Property \ref{prop:sum_weights} gives
\begin{equation}
\VAR{\sum_{i=1}^{n}\omega_i(S_t)} 
= \frac{n-m}{m}\sum_{i=1}^{n} p_i^2 
.
\end{equation}

\textbf{Amount of Clients}.
\begin{equation}
\E{N} = m 
\text{ and }
\VAR{N}
= m - \frac{m^2}{n}.
\end{equation}

%
%
%

\subsection{Clustered Sampling}\label{app:sec:clustered}

Clustered sampling \citep{ClusteredSampling} is a generalization of MD sampling where instead of sampling $m$ clients from the same distributions, $m$ clients are sampled from $m$ different distributions $\{W_k\}_{k=1}^m$ each of them privileging a different subset of clients. We denote by $r_{k, i}$ the probability of client $i$ to be sampled in distribution $k$. To satisfy Definition \ref{def:sampling}, the original work \citep{ClusteredSampling} provides the conditions:
\begin{equation}
	\forall k \in \{1, ..., m\},\ \sum_{i=1}^{n}r_{k, i} = 1
	\text{ and }
	\forall i \in \{1, ..., n\},\ \sum_{k=1}^{m}r_{k, i} = m p_i
	.
	\label{app:eq:clustered_prop}
\end{equation}
The clients aggregation weights remain identical to the one of MD sampling, i.e.
\begin{equation}
	\omega_i(S_{Cl}) = \frac{1}{m}\sum_{k=1}^{K}\mathbb{I}(l_k = i)
	,
\end{equation}
where $\mathbb{I}(l_k = i)$ are still independently distributed but not identically.

We have
\begin{align}
\E{\omega_i(S_t)\omega_j(S_t)} 
&= \frac{1}{m^2} \sum_{k, l\neq k}\E{  \mathbb{I}(l_k=i)   \mathbb{I}(l_l=j) } 
+ \frac{1}{m^2} \sum_{k=1}^m\E{  \mathbb{I}(l_k=i)   \mathbb{I}(l_k=j) }\\
&= \frac{1}{m^2} \sum_{k, l\neq k} r_{k, i} r_{l, j}
+ \frac{1}{m^2} \sum_{k=1}^m\E{  \mathbb{I}(l_k=i)   \mathbb{I}(l_k=j)  }\\
&= p_i p_j 
-\frac{1}{m^2} \sum_{k=1}^{m} r_{k, i}r_{k, j}
+ \frac{1}{m^2} \sum_{k=1}^m\E{  \mathbb{I}(l_k=i)   \mathbb{I}(l_k=j)  }
%
,
\end{align}
where we retrieve equation (\ref{app:eq:R0}) when $r_{k, i} = p_i$.

\textbf{Variance ($ i = j$)}. We get $\E{ \mathbb{I}(l_k=i)   \mathbb{I}(l_k=j) } = \E{ \mathbb{I}(l_k=i)   } = r_{k, i}$, which gives:
\begin{equation}
	\VAR{\omega_i(S_{Cl})}
	= \frac{1}{m} p_i
	- \frac{1}{m^2}\sum_{k=1}^{m}r_{k, i}^2
	\le \VAR{\omega_i(S_{MD})},
	\label{app:eq:T1}
\end{equation}
where the inequality comes from using the Cauchy-Schwartz inequality with equality if and only if all the $m$ distributions are identical, i.e. $r_{k, i} = p_i$. 

\textbf{Covariance ($ i \neq j$)}. We get $\E{\mathbb{I}(l_k=i)   \mathbb{I}(l_k=j)} = 0$, which gives:
\begin{equation}
	\COV{\omega_i(S_{Cl})}{\omega_j(S_{Cl})}
	= - \frac{1}{m^2}\sum_{k=1}^{m}r_{k, i}r_{k, j}
	\le \COV{\omega_i(S_{MD})}{\omega_j(S_{MD})},
	\label{app:eq:T2}
\end{equation}
where the inequality comes from using the Cauchy-Schwartz inequality with equality if and only if all the $m$ distributions are identical, i.e. $r_{k, i} = p_i$. 

\textbf{Aggregation Weights Sum}
\begin{equation}
	\VAR{\sum_{i=1}^{n} \omega_i(S_{Cl})} 
	= 0.
	\label{app:eq:T3}
\end{equation}

\subsection{Optimal Sampling}
{With optimal sampling \citep{Richtarik_optimal_sampling}, clients are sampled according to a Bernoulli distribution with probability $q_i$, i.e.
\begin{equation}
\omega_i(S_t) 
= \frac{p_i}{q_i}\mathbb{B}(q_i).
\end{equation}
Hence, we retrieve $\E{\omega_i(S_t)} 
= \frac{p_i}{q_i} q_i
= p_i$. 

\textbf{Variance}.
\begin{equation}
\VARR{S_t}{\omega_i(S_t)} 
= \frac{1- q_i}{q_i} p_i^2
.
\end{equation}

\textbf{Covariance}.
Due to the independence of each stochastic weight, we have:
\begin{equation}
\COV{\omega_i(S_t)}{\omega_j(S_t)} = 0. 
\end{equation}

\textbf{Aggregation Weights Sum}.
Using Property \ref{prop:sum_weights} gives
\begin{equation}
\VAR{\sum_{i=1}^{n}\omega_i(S_t)} 
= \sum_{i=1}^{n} \frac{1- q_i}{q_i} p_i^2 
.
\end{equation}

\textbf{Amount of Clients}.
\begin{equation}
\E{N} = \sum_{i=1}^n q_i
\text{ and }
\VAR{N}
= \sum_{i=1}^n q_i ( 1- q_i).
\end{equation}}

\begin{table}
	\caption{Common Notation Summary.}
	\label{table:parameters}
	\centering
	\begin{tabular}{cc}
		\hline
		Symbol     & Description    \\
		\hline
		$n$ & Number of clients. \\
		$K$ & Number of local SGD.  \\
		$\eta_l$ & Local/Client learning rate.\\
		$\eta_g$ & Global/Server learning rate.\\
		$\tilde{\eta}$ & Effective learning rate, $\tilde{\eta} = \eta_l \eta_g$.\\
		
		$\vtheta^t$ & Global model at server iteration $t$.  \\
		$\vtheta^*$ & Optimum of the federated loss function, equation (\ref{eq:global_loss}). \\
		$\vtheta_i^{t+1}$ & Local update of client $i$ on model $\theta^t$.\\
		$\vy_{i, k}^{t}$ & Local model of client i after $k$ SGD ($y_{i, K}^{t} = \theta_i^{t+1}$ and $y_{i, 0}^{t} = \theta^t$). \\
		
		$p_i$ & Importance of client $i$ in the federated loss function, equation (\ref{eq:global_loss}).  \\
		$m$ & Number of sampled clients .\\
		$S_t$ & Set of participating clients considered at iteration $t$.\\
		$\omega_i(S_t)$ & Aggregation weight for client $i$ given $S_t$.\\
		$\alpha$ & Covariance parameter.\\
		$\gamma_i$ & cf Section \ref{sec:convergence}\\
		
		$\EE{t}{\cdot}$ & Expected value conditioned on $\theta^t$.\\
		$\Lcal(\cdot)$ & Federated loss function, equation \ref{eq:global_loss}\\
		$\Lcal_i(\cdot)$ & Local loss function of client $i$.\\
		$g_i(\cdot)$ & SGD. We have $\EE{\xi_i}{g_i(\cdot)} = \nabla \Lcal_i(\cdot)$ with Assumption \ref{ass:unbiased}.\\
		$\xi_i$ & Random batch of samples from client $i$ of size $B$.\\

		$L$ & Lipschitz smoothness parameter, Assumption \ref{ass:smoothness}.\\
		$\sigma^2$ & Bound on the variance of the stochastic gradients, Assumption \ref{ass:unbiased}.\\
		$\beta$, $\kappa$ & Assumption \ref{ass:dissimilarity} parameters on the clients gradient bounded dissimilarity.\\

		\hline
	\end{tabular}
\end{table}

\begin{algorithm}[tb]
	\caption{Federated Learning based on equation (\ref{eq:aggreg_SCAFFOLD_general})}\label{alg:pseudo_code}
	\begin{algorithmic}
		\STATE The server sends to the $n$ clients the learning parameters ($K$, $\eta_l$, $B$).
		
		\FOR{$t=0$ {\bfseries to} $T - 1$}
		    \STATE Sample a set of clients $S_t$ and get their aggregation weights $d_i(t)$.
		    
		    \STATE Send to clients in $S_t$ the current global model $\vtheta^t$.
		    
		    \STATE Receive each sampled client contributions $c_i(t) = \vtheta_i^{t+1} -\vtheta^t$.
		    
		    \STATE Creates the new global model $\vtheta^{t+1} = \vtheta^t + \eta_g \sum_{i=1}^n d_i(t) c_i(t) $.
		
		\ENDFOR
	\end{algorithmic}
\end{algorithm}

\section{FL Convergence}\label{app:sec:convergence_Wang}

In Table \ref{table:parameters}, we provide the definition of the different notations used in this work. 
We also propose in Algorithm \ref{alg:pseudo_code} the pseudo-code for \textsc{FedAvg} with aggregation scheme (\ref{eq:aggreg_SCAFFOLD_general}).
Our work is based on the one of \cite{FedNova}. We use the developed theoretical framework they proposed to prove Theorem \ref{theo:convergence_paper}. The focus of our work (and Theorem \ref{theo:convergence_paper}) is on \textsc{FedAvg}. Yet, the proof developed in this section, similarly to the one of \cite{FedNova}, expresses $a_i$ in such a way they can account for a wide-range of regularization method on \textsc{FedAvg}, or optimizers different from Vanilla SGD. This proof can easily be extended to account for different amount of local work from the clients \citep{FedNova}. 

Before developing the proof of Theorem \ref{theo:convergence_paper} in Section \ref{app:sec:proof_convergence}, we introduce the notation we use in Section \ref{app:sec:notation}, some useful lemmas in Section \ref{app:sec:lemmas} and Theorem \ref{theo:intermediary} generalizing Theorem \ref{theo:convergence_paper} in Section \ref{app:sec:intermediary_theo}.

\subsection{Notations}\label{app:sec:notation}
We define by $\vy_{i, k}^t$ the local model of client $i$ after $k$ SGD steps initialized on $\vtheta^t$, which enables us to also define the normalized stochastic gradients $\vd_i^t$ and the normalized gradient $\vh_i^t$ defined as
\begin{equation}
\vd_i^t = \frac{1}{a_i} \sum_{k=0}^{K-1} a_{i,k} g_i(\vy_{i, k}^t)
\text{ and }
\vh_i^t = \frac{1}{a_i} \sum_{k=0}^{K-1} a_{i, k} \nabla \Lcal_i (\vy_{i, k}^t),
\label{app:eq:formalism_sum_grad}
\end{equation}
where $a_{i, k}$ is an arbitrary scalar applied by the client to its $k$th gradient, $\va_i = [a_{i, 0}, .., a_{i, K -1}]^T$, and $a_i = \norm{\va_i}_1$. In the special case of \textsc{FedAvg}, we have $\va_i = [1, ..., 1]$ and in the one of \textsc{FedProx}, we have $\va_i = [(1 - \mu)^{K-1}, ..., 1]$ where $\mu$ is the \textsc{FedProx} regularization parameter. 

With the formalism of equation (\ref{app:eq:formalism_sum_grad}), we can express a client contribution as $\vtheta_i^{t+1} - \vtheta^t  = - \eta_l a_i \vd_i^t $ and rewrite the server aggregation scheme defined in equation (\ref{eq:aggreg_SCAFFOLD_general}) as 
\begin{align}
\vtheta^{t+1} - \vtheta^t  
&= - \eta_g \eta_l \sum_{i=1}^{n}\omega_i a_i \vd_i^t
,
\end{align}
which in expectation over the set of sampled clients $S_t$ gives
\begin{align}
\EE{S_t}{\vtheta^{t+1} - \vtheta^t}  
= - \tilde{\eta} \sum_{i=1}^{n}p_i a_i \vd_i^t
= - \tilde{\eta} \underbrace{\left(\sum_{i=1}^{n} p_i a_i \right)}_{K_{eff}} \sum_{i=1}^{n} \underbrace{\left(\frac{p_i a_i}{\sum_{i=1}^{n} p_i a_i}\right)}_{w_i} \vd_i^t.
\label{app:eq:expected_client_contrib}
\end{align}

We define the surrogate objective $\tilde{\Lcal}(\vx) = \sum_{i=1}^{n} w_i \Lcal_i(\vx)$, where $\sum_{i=1}^{n}w_i =1 $. 

In what follows, the norm used for $\va_i$ can either be L1, $\norm{\cdot}_1$, or L2, $\norm{\cdot}_2$, For other variables, the norm is always the euclidean  one and $\norm{\cdot}$ is used instead of $\norm{\cdot}_2$. Also, regarding the client sampling metrics, for ease of writing, we use $\omega_i$ instead of $\omega_i(S_t)$ due to the independence of the client sampling statistics with respect to the current optimization round.

\subsection{Useful Lemmas}\label{app:sec:lemmas}

\begin{lemma}\label{lem:decompo_Xi}
	Let us consider $n$ vectors $\vx_i, ..., \vx_n$ and a client sampling satisfying $\EE{S_t}{\omega_i(S_t)} = p_i$ and $\COV{\omega_i(S_t)}{\omega_j(S_t)} = - \alpha p_i p_j$. We have:
	\begin{equation}
		\EE{S_t}{\norm{\sum_{i=1}^{n}\omega_i(S_t)\vx_i}^2} 
		= \sum_{i=1}^{n} \gamma_i \norm{\vx_i}^2
		+ (1 - \alpha)\norm{\sum_{i=1}^{n}p_i \vx_i}^2,
	\end{equation}
	where $\gamma_i =  \VARR{S_t}{\omega_i(S_t)} + \alpha p_i^2 $.
\end{lemma}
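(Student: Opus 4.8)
\textbf{Proof plan for Lemma \ref{lem:decompo_Xi}.}

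The plan is to expand the squared norm into a double sum, split it into diagonal ($i=j$) and off-diagonal ($i\neq j$) contributions, and then use the first and second moments of the aggregation weights supplied by the hypotheses. Concretely, I would start from
\begin{equation}
\EE{S_t}{\norm{\sum_{i=1}^{n}\omega_i(S_t)\vx_i}^2}
= \sum_{i=1}^{n}\sum_{j=1}^{n} \E{\omega_i(S_t)\omega_j(S_t)} \inner{\vx_i}{\vx_j}.
\end{equation}
For each pair $(i,j)$ I would write $\E{\omega_i\omega_j} = \COV{\omega_i}{\omega_j} + \E{\omega_i}\E{\omega_j}$. Using $\EE{S_t}{\omega_i(S_t)} = p_i$, the ``mean'' part contributes $\sum_{i,j} p_i p_j \inner{\vx_i}{\vx_j} = \norm{\sum_{i=1}^{n}p_i\vx_i}^2$. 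For the covariance part, the off-diagonal terms give $\sum_{i\neq j}(-\alpha p_i p_j)\inner{\vx_i}{\vx_j}$ by Assumption \ref{ass:covariance}, and the diagonal terms give $\sum_{i=1}^{n}\VARR{S_t}{\omega_i(S_t)}\norm{\vx_i}^2$.

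The second step is to reassemble these pieces. The key algebraic move is to complete the off-diagonal sum $\sum_{i\neq j}(-\alpha p_i p_j)\inner{\vx_i}{\vx_j}$ into a full double sum by adding and subtracting the diagonal: $\sum_{i\neq j}(-\alpha p_i p_j)\inner{\vx_i}{\vx_j} = -\alpha\norm{\sum_{i=1}^{n}p_i\vx_i}^2 + \alpha\sum_{i=1}^{n}p_i^2\norm{\vx_i}^2$. Combining with the mean part yields the coefficient $(1-\alpha)$ on $\norm{\sum_{i=1}^{n}p_i\vx_i}^2$, and collecting all diagonal contributions gives $\sum_{i=1}^{n}\left(\VARR{S_t}{\omega_i(S_t)} + \alpha p_i^2\right)\norm{\vx_i}^2 = \sum_{i=1}^{n}\gamma_i\norm{\vx_i}^2$, which is exactly the claimed identity.

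This argument is essentially a bookkeeping exercise, so there is no single hard obstacle; the only place to be careful is the add-and-subtract step that converts the restricted sum $\sum_{i\neq j}$ into an unrestricted one, making sure the diagonal correction $+\alpha\sum_i p_i^2\norm{\vx_i}^2$ is tracked with the correct sign and then merged with the variance terms. No assumption beyond unbiasedness (first moment $p_i$) and the covariance structure of Assumption \ref{ass:covariance} is needed; in particular one does not need the weights to sum to one, which is consistent with the paper's emphasis that Theorem \ref{theo:convergence_paper} allows $\sum_i\omega_i(S_t)\neq 1$.
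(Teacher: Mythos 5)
Your proposal is correct and follows essentially the same route as the paper's proof: expand the squared norm into a double sum, apply the covariance assumption to the off-diagonal terms, and use the identity $\sum_{i\neq j}p_ip_j\inner{\vx_i}{\vx_j}=\norm{\sum_i p_i\vx_i}^2-\sum_i p_i^2\norm{\vx_i}^2$ to reassemble the pieces into the $(1-\alpha)$ coefficient and the $\gamma_i$ terms. The only cosmetic difference is that you decompose $\E{\omega_i\omega_j}$ into covariance plus mean product uniformly over all pairs, whereas the paper keeps the diagonal as $\E{\omega_i^2}$ and converts it via $\E{\omega_i^2}=\VAR{\omega_i}+p_i^2$ at the end; both bookkeeping choices yield the same result.
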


\begin{proof}
	\begin{align}
		\EE{S_t}{\norm{\sum_{i=1}^{n}\omega_i(S_t)\vx_i}^2} 
		& = \sum_{i=1}^{n} \EE{S_t}{\omega_i(S_t)^2} \norm{\vx_i}^2
		+ \sum_{i=1}^{n}\sum_{\substack{j=1\\ j\neq i}}^{n}\EE{S_t}{\omega_i(S_t)\omega_j(S_t)}\inner{\vx_i}{\vx_j}.
		\label{app:eq:AA1}
	\end{align}
	
	In addition, we have:
	\begin{equation}
		\EE{S_t}{\omega_i(S_t)\omega_j(S_t)}
		= \COV{\omega_i(S_t)}{\omega_j(S_t)} + p_i p_j
		= (- \alpha + 1) p_i p_j,
		\label{app:eq:AA2}
	\end{equation}
	where the last equality comes from the assumption on the client sampling covariance.
	
	We also have:
	\begin{equation}
		\sum_{i=1}^{n}\sum_{\substack{j=1\\ j\neq i}}^{n}\inner{p_i \vx_i}{p_j \vx_j} 
		= \norm{\sum_{i=1}^{n}p_i \vx_i}^2 - \sum_{i=1}^{n}p_i^2 \norm{\vx_i}^2,
		\label{app:eq:AA3}
	\end{equation}
	
	Substituting equation (\ref{app:eq:AA2})  and equation (\ref{app:eq:AA3}) in equation (\ref{app:eq:AA1}) gives:
	\begin{equation}
		\EE{S_t}{\norm{\sum_{i=1}^{n}\omega_i(S_t)\vx_i}^2} 
		= \sum_{i=1}^{n} \left[ \EE{S_t}{\omega_i(S_t)^2} - (-\alpha +1 )p_i^2 \right] \norm{\vx_i}^2
		+ (-\alpha +  1)\norm{\sum_{i=1}^{n}p_i \vx_i}^2,
		\label{app:eq:AA4}
	\end{equation}
	
	Considering that we have $\EE{S_t}{\omega_i(S_t)^2} = \VAR{\omega_i(S_t)} + p_i^2$, we have :
	\begin{equation}
	\EE{S_t}{\omega_i(S_t)^2} + (\alpha - 1) p_i^2
	= \VARR{S_t}{\omega_i(S_t)} + \alpha p_i^2,
	\label{app:eq:AA5}
	\end{equation}
	
	Substituting equation (\ref{app:eq:AA5}) in equation (\ref{app:eq:AA4}) completes the proof.

\end{proof}

\begin{lemma}[equation (87) in \cite{FedNova}]\label{lem:dif_grads}
	Under Assumptions \ref{ass:smoothness} to \ref{ass:unbiased}, we can prove
	\begin{align}
		\frac{1}{2}\sum_{i=1}^{n} w_i \E{\norm{\nabla \Lcal_i(\vtheta^t) - \vh_i^t}^2}
		&\le \frac{1}{2} \frac{\eta_l^2 L^2 \sigma^2}{1 -R} \sum_{i=1}^{n} w_i \left( \norm{\va_i}_2^2 - a_{i, -1}^2\right)
		+ \frac{R \beta^2 }{2 (1 - R)}\E{\norm{\nabla \tilde{\Lcal}(\vtheta^t)}^2}
		+ \frac{R \kappa^2 }{2 ( 1 -R)},
	\end{align}
	with $R = 2 \eta_l^2 L^2 \max_{i}\{\norm{\va_i}_1 (\norm{\va_i}_1 - a_{i, -1})\}$ with a learning rate such that $R < 1$.
\end{lemma}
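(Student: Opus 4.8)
The plan is to reduce the stated quantity to a bound on the local \emph{client drift} $\E{\norm{\vtheta^t - \vy_{i,k}^t}^2}$ and then close a self-referential inequality whose self-coupling coefficient turns out to be exactly $R$. First I would exploit that $\vh_i^t = \frac{1}{a_i}\sum_{k=0}^{K-1} a_{i,k}\nabla\Lcal_i(\vy_{i,k}^t)$ is a convex combination of the trajectory gradients, since $a_i = \norm{\va_i}_1 = \sum_k a_{i,k}$. Writing $\nabla\Lcal_i(\vtheta^t) - \vh_i^t = \frac{1}{a_i}\sum_k a_{i,k}\left[\nabla\Lcal_i(\vtheta^t) - \nabla\Lcal_i(\vy_{i,k}^t)\right]$, applying Jensen's inequality to the squared norm, and then Assumption \ref{ass:smoothness}, gives $\norm{\nabla\Lcal_i(\vtheta^t) - \vh_i^t}^2 \le \frac{L^2}{a_i}\sum_k a_{i,k}\norm{\vtheta^t - \vy_{i,k}^t}^2$, so the whole left-hand side is governed by the weighted drift.

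Next I would bound $\E{\norm{\vtheta^t - \vy_{i,k}^t}^2}$ by writing the displacement after $k$ local steps as the accumulation of the (weighted) stochastic gradients evaluated at $\vy_{i,0}^t,\dots,\vy_{i,k-1}^t$, splitting each $g_i$ into its mean $\nabla\Lcal_i$ and the zero-mean noise $g_i - \nabla\Lcal_i$. Assumption \ref{ass:unbiased} removes the cross terms in expectation and controls the noise by $\sigma^2$; because the gradient indexed by the final local step never enters any displacement, the noise contribution aggregates, after weighting by $a_{i,k}/a_i$, to a term proportional to $\eta_l^2\sigma^2(\norm{\va_i}_2^2 - a_{i,-1}^2)$. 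The deterministic part is handled by Cauchy--Schwarz across the partial sums of the step weights, which produces the factor $\norm{\va_i}_1(\norm{\va_i}_1 - a_{i,-1})$ together with a weighted average of $\norm{\nabla\Lcal_i(\vy_{i,k}^t)}^2$ along the trajectory.

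The decisive step is then to re-expand $\norm{\nabla\Lcal_i(\vy_{i,k}^t)}^2 \le 2\norm{\nabla\Lcal_i(\vtheta^t)}^2 + 2L^2\norm{\vtheta^t - \vy_{i,k}^t}^2$ via Assumption \ref{ass:smoothness}; the factor $2$ here is what propagates into the definition of $R$. Substituting this back reproduces the weighted drift on the right-hand side, so after summing over clients with the weights $w_i$ I obtain an inequality of the schematic form $\frac{1}{2}\sum_i w_i\E{\norm{\nabla\Lcal_i(\vtheta^t) - \vh_i^t}^2} \le (\text{noise}) + R\,\frac{1}{2}\sum_i w_i\E{\norm{\nabla\Lcal_i(\vtheta^t) - \vh_i^t}^2} + (\text{a multiple of } \sum_i w_i\norm{\nabla\Lcal_i(\vtheta^t)}^2)$, with $R = 2\eta_l^2 L^2\max_i\{\norm{\va_i}_1(\norm{\va_i}_1 - a_{i,-1})\}$ collecting the drift self-coupling. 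Moving the $R$-term to the left under the hypothesis $R<1$ produces the $\frac{1}{1-R}$ prefactors.

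Finally I would invoke Assumption \ref{ass:dissimilarity} with the weights $w_i$ (so that the surrogate $\tilde{\Lcal} = \sum_i w_i\Lcal_i$ plays the role of the objective) to replace the surviving $\sum_i w_i\norm{\nabla\Lcal_i(\vtheta^t)}^2$ by $\beta^2\norm{\nabla\tilde{\Lcal}(\vtheta^t)}^2 + \kappa^2$, which yields exactly the three terms of the statement. I expect the main obstacle to be the constant bookkeeping in the drift recursion: obtaining the noise constant as precisely $\norm{\va_i}_2^2 - a_{i,-1}^2$ and the self-coupling as precisely $\norm{\va_i}_1(\norm{\va_i}_1 - a_{i,-1})$ requires tracking which local steps actually enter each displacement --- the final gradient evaluation contributes to $\vh_i^t$ but never to a drift --- and arranging the Cauchy--Schwarz and variance splits so that the coefficient of the recursive term is exactly $R$ rather than a loose multiple of it.
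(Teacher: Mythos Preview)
Your proposal is correct and matches the paper's approach, which defers to Section~C.5 of \cite{FedNova} together with exactly the mean--variance split you describe (used in place of FedNova's cruder $\norm{\va+\vb}^2\le 2\norm{\va}^2+2\norm{\vb}^2$). The only cosmetic point is that the self-referential inequality is closed on the weighted drift $\frac{1}{a_i}\sum_k a_{i,k}\E{\norm{\vy_{i,k}^t-\vtheta^t}^2}$ per client (cf.\ the paper's equation for this quantity in the proof of Lemma~\ref{lem:contrib_client_sampling}), not directly on $\sum_i w_i\E{\norm{\nabla\Lcal_i(\vtheta^t)-\vh_i^t}^2}$, since the Jensen/Lipschitz step only gives a one-sided bound; once that is adjusted, the argument is exactly as you outline.
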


\begin{proof}
	The proof is in Section C.5 of \cite{FedNova}.
	
	The bound here provided  is slightly tighter in term of numerical constants than the one of \cite{FedNova}. Indeed, equation (70) in \cite{FedNova} uses the Jensen's inequality $\norm{ \va + \vb}^2 \le 2 \norm{\va}^2 + 2 \norm{\vb}^2$ which could instead be obtained with:
	\begin{equation}
		\E{\norm{\sum_{s=0}^{k-1}a_{i,s} g_i(\vy_{i, s}^t)}^2}
		= \E{\norm{\sum_{s=0}^{k-1}a_{i,s} \left(g_i(\vy_{i, s}^t ) - \nabla \Lcal_i(\vy_{i, s}^t) \right)}^2}
		+ \E{\norm{\sum_{s=0}^{k-1}a_{i,s} \nabla \Lcal_i(\vy_{i, s}^t)}^2},
	\end{equation}
	which uses Assumption \ref{ass:unbiased}, giving $\E{\inner{\sum_{s=0}^{k-1}a_{i,s} \left(g_i(\vy_{i, s}^t ) - \nabla \Lcal_i(\vy_{i, s}^t \right)}{\sum_{s=0}^{k-1}a_{i,s} \nabla \Lcal_i(\vy_{i, s}^t }} = 0$ with the same reasoning as for $U$ in equation (\ref{app:eq:B2}).
\end{proof}

\begin{lemma}\label{lem:contrib_client_sampling}
	Under Assumptions \ref{ass:smoothness} to \ref{ass:unbiased}, we can prove
	\begin{equation}
	\sum_{i=1}^n \gamma_i \E{\norm{ a_i \vh_i^t}^2}
	\le \frac{1}{1 - R} \sigma^2 \sum_{i=1}^{n} \gamma_i \left(\norm{\va_i}_2^2 - (a_{i, -1}^2) \right)
	+ 2  \frac{1}{1 - R} \left(\sum_{i=1}^{n}\gamma_i a_i^2\right) \left(\beta^2 \E{\norm{\nabla  \tilde{\Lcal} (\vtheta^t)}^2} + \kappa^2 \right),
	\end{equation}
	where $R' = 2 \eta_l^2 L^2 \max_i \{ \norm{a_i}_1^2 \} <1$.
	
\end{lemma}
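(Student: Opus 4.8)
The plan is to bound $\E{\norm{a_i \vh_i^t}^2}$ for each client $i$ and then take the $\gamma_i$-weighted sum. First I would recall the definition $a_i \vh_i^t = \sum_{k=0}^{K-1} a_{i,k} \nabla \Lcal_i(\vy_{i,k}^t)$ from equation (\ref{app:eq:formalism_sum_grad}), and write
\begin{equation}
\norm{a_i \vh_i^t}^2 = \norm{\sum_{k=0}^{K-1} a_{i,k}\left(\nabla \Lcal_i(\vy_{i,k}^t) - \nabla \Lcal_i(\vtheta^t)\right) + a_i \nabla \Lcal_i(\vtheta^t)}^2.
\end{equation}
Applying the Jensen-type inequality $\norm{\va + \vb}^2 \le 2\norm{\va}^2 + 2\norm{\vb}^2$ splits this into a ``client drift'' term $2\norm{\sum_k a_{i,k}(\nabla\Lcal_i(\vy_{i,k}^t) - \nabla\Lcal_i(\vtheta^t))}^2$ and a term $2 a_i^2 \norm{\nabla\Lcal_i(\vtheta^t)}^2$. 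The second term, after summing with weights $\gamma_i$ and taking expectations, is handled by Assumption \ref{ass:dissimilarity}: since $\{w_i\}$ are positive and sum to one, $\sum_i \gamma_i a_i^2 \norm{\nabla\Lcal_i(\vtheta^t)}^2 \le (\sum_i \gamma_i a_i^2)(\beta^2\norm{\nabla\tilde\Lcal(\vtheta^t)}^2 + \kappa^2)$ — strictly this requires rescaling the weights $\gamma_i a_i^2 / (\sum_j \gamma_j a_j^2)$ to sum to one, then multiplying back, which is where the factor $\sum_i \gamma_i a_i^2$ comes from.

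For the drift term, the key step is to control $\E{\norm{\vy_{i,k}^t - \vtheta^t}^2}$, the distance a client moves over its local SGD steps. Using $L$-smoothness (Assumption \ref{ass:smoothness}) to pass from $\norm{\nabla\Lcal_i(\vy_{i,k}^t) - \nabla\Lcal_i(\vtheta^t)}$ to $L\norm{\vy_{i,k}^t - \vtheta^t}$, then bounding the cumulative displacement by a telescoping/induction argument over $k$ in terms of the stochastic gradients $g_i(\vy_{i,s}^t)$, and invoking Assumption \ref{ass:unbiased} ($\E{\norm{\vxi_i}^2}\le\sigma^2$ and unbiasedness to kill cross terms) together with Assumption \ref{ass:dissimilarity} again on $\norm{\nabla\Lcal_i}^2$, yields a bound of the form (drift) $\lesssim \eta_l^2 L^2 [\,\sigma^2(\norm{\va_i}_2^2 - a_{i,-1}^2) + \text{const}\cdot(\beta^2\norm{\nabla\tilde\Lcal}^2+\kappa^2)\,]$ plus a term proportional to $R'\norm{a_i\vh_i^t}^2$ itself. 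This self-referential term is exactly what produces the $1/(1-R')$ prefactor after rearranging (moving the $R'\E{\norm{a_i\vh_i^t}^2}$ to the left side and dividing), provided $R' = 2\eta_l^2 L^2 \max_i\{\norm{\va_i}_1^2\} < 1$. This mirrors the argument used in Lemma \ref{lem:dif_grads} and the analogous computations in \cite{FedNova}.

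The main obstacle I anticipate is the bookkeeping around the self-bounding recursion: one must expand $\norm{\sum_{s=0}^{k-1}a_{i,s}g_i(\vy_{i,s}^t)}^2$, separate the stochastic-noise part (which contributes the $\sigma^2(\norm{\va_i}_2^2 - a_{i,-1}^2)$ term via the variance bound, with the $-a_{i,-1}^2$ correction coming from the fact that the last gradient $a_{i,K-1}$ does not feed into any subsequent drift) from the gradient-norm part, relate the latter back to $\norm{a_i\vh_i^t}^2$ up to the $\max_i\norm{\va_i}_1$ factors by Cauchy–Schwarz on the $a_{i,s}$ coefficients, and finally rearrange to isolate $\sum_i\gamma_i\E{\norm{a_i\vh_i^t}^2}$ with the claimed constants. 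Keeping the constants consistent with the statement (factor $2$ in front of the $\beta^2$/$\kappa^2$ term, and the precise form $\norm{\va_i}_2^2 - a_{i,-1}^2$) is the delicate part; the conceptual ingredients are all already available from Assumptions \ref{ass:smoothness}–\ref{ass:unbiased} and the techniques of Lemma \ref{lem:dif_grads}.
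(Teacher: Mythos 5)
Your proposal follows essentially the same route as the paper's proof: split $a_i\vh_i^t$ into a client-drift part and the anchor gradient $a_i\nabla\Lcal_i(\vtheta^t)$, convert the drift to $\E{\norm{\vy_{i,k}^t-\vtheta^t}^2}$ via $L$-smoothness, invoke the FedNova-style self-bounding recursion on the drift to obtain the $1/(1-R)$ factor and the $\sigma^2\left(\norm{\va_i}_2^2-a_{i,-1}^2\right)$ term, and finish with Assumption \ref{ass:dissimilarity} applied to the normalized weights $\gamma_i a_i^2/\sum_j\gamma_j a_j^2$. The only cosmetic difference is that the paper's recursion closes on the aggregated drift $\frac{1}{a_i}\sum_{k}a_{i,k}\E{\norm{\vy_{i,k}^t-\vtheta^t}^2}$ (leaving a residual $\frac{R}{1-R}\frac{1}{L^2}\E{\norm{\nabla\Lcal_i(\vtheta^t)}^2}$ that merges with the anchor term, with $R'<1$ used only to clean up the $\sigma^2$ coefficient) rather than on $\E{\norm{a_i\vh_i^t}^2}$ itself, but the rearrangement mechanism you describe is the same.
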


\begin{proof}
	Due to the definition of $\vh_i^t$, we have:
	\begin{align}
		\E{\norm{ a_i \vh_i^t}^2}
		= a_i^2 \E{\norm{\sum_{k=0}^{K-1} \frac{1}{a_i} a_{i, k} \nabla \Lcal_i (\vy_{i, k}^t)}^2}
		\le a_i^2 \sum_{k=0}^{K-1}\frac{1}{a_i} a_{i, k} \E{\norm{\nabla \Lcal_i (\vy_{i, k}^t)}^2}.
		\label{app:eq:C1}
	\end{align}
	
	Using Jensen inequality, we have
	\begin{align}
		\E{\norm{\nabla \Lcal_i (\vy_{i, k}^t)}^2}
		&\le 2 \E{ \norm{\nabla \Lcal_i (\vy_{i, k}^t) - \nabla \Lcal_i (\vtheta^t)}^2}
		+ 2 \E{\norm{\nabla \Lcal_i (\vtheta^t)}^2}\\
		&\le 2 L^2 \E{ \norm{ \vy_{i, k}^t - \vtheta^t}^2}
		+ 2 \E{\norm{\nabla \Lcal_i (\vtheta^t)}^2},
		\label{app:eq:C2}
	\end{align}
	where the second equality comes from using Assumption \ref{ass:smoothness}.
	
	Also, Section C.5 of \cite{FedNova} proves
	\begin{equation}
		\frac{1}{a_i} \sum_{k=0}^{K-1} a_{i, k}\E{ \norm{ \vy_{i, k}^t - \vtheta^t}^2}
		\le \frac{1}{1 - R}\eta_l^2 \sigma^2 \left(\norm{\va_i}_2^2 - (a_{i, -1}^2) \right)
		+ \frac{1}{L^2}\frac{R}{1 - R} \E{\norm{\nabla \Lcal_i (\vtheta^t)}^2}
		.
		\label{app:eq:C3}
	\end{equation}
	
	Plugging equation (\ref{app:eq:C2}) and then equation (\ref{app:eq:C3}) in equation (\ref{app:eq:C1}), we get:
	\begin{align}
	\E{\norm{ a_i \vh_i^t}^2}
	&\le a_i^2 \sum_{k=0}^{K-1}\frac{1}{a_i} a_{i, k} \left[2 L^2 \E{ \norm{ \vy_{i, k}^t - \vtheta^t}^2}
	+ 2 \E{\norm{\nabla \Lcal_i (\vtheta^t)}^2}\right]
	\\
	&= 2 L^2  a_i^2 \sum_{k=0}^{K-1}\frac{1}{a_i} a_{i, k}  \E{ \norm{ \vy_{i, k}^t - \vtheta^t}^2}
	+ 2 a_i^2 \E{\norm{\nabla \Lcal_i (\vtheta^t)}^2}
	\\
	&\le 2 L^2  a_i^2 \left[\frac{1}{1 - R}\eta_l^2 \sigma^2 \left(\norm{\va_i}_2^2 - (a_{i, -1}^2) \right)
	+ \frac{1}{L^2}\frac{R}{1 - R} \E{\norm{\nabla \Lcal_i (\vtheta^t)}^2}\right]
	+ 2 a_i^2 \E{\norm{\nabla \Lcal_i (\vtheta^t)}^2}
	\\
	&\le \frac{R'}{1 - R} \sigma^2 \left(\norm{\va_i}_2^2 - (a_{i, -1}^2) \right)
	+ 2 a_i^2 \left[ \frac{R}{1 - R} +1 \right] \E{\norm{\nabla \Lcal_i (\vtheta^t)}^2}.
	\label{app:eq:C4}
	\end{align}
	
	Multiplying by $\gamma_i$ and summing over $n$ gives
	\begin{align}
	\sum_{i=1}^n \gamma_i \E{\norm{ a_i \vh_i^t}^2}
	&\le \frac{R'}{1 - R} \sigma^2 \sum_{i=1}^{n} \gamma_i \left(\norm{\va_i}_2^2 - (a_{i, -1}^2) \right)
	+ 2  \frac{1}{1 - R} \sum_{i=1}^{n}\gamma_i a_i^2 \E{\norm{\nabla \Lcal_i (\vtheta^t)}^2}.
	\label{app:eq:C5}
	\end{align}
	
	Using Assumption \ref{ass:dissimilarity} in equation (\ref{app:eq:C5}) and $R' < 1$ completes the proof.

%

\end{proof}

\subsection{Intermediary Theorem}\label{app:sec:intermediary_theo}

\begin{theorem}\label{theo:intermediary}
The following inequality holds:
\begin{align}
\frac{1}{T}\sum_{t=0}^{T-1} \E{\norm{\nabla \tilde{\Lcal}(\vtheta^t)}^2}
&\le \Ocal(\frac{1}{ ( 1 - \Omega) \tilde{\eta} \left(\sum_{i=1}^{n} p_i a_i\right) T}) 
+ \Ocal (\tilde{\eta} \frac{1}{m} A' \sigma^2 )
+ \Ocal( \eta_l^2 B'  \sigma^2 )
\nonumber\\
& + \Ocal (\eta_l^2 C' \kappa^2 )
+ \Ocal(\tilde{\eta} D' \sigma^2)
+ \Ocal(\tilde{\eta} E' \kappa^2)
,
\end{align}
where quantities $A'$-$E'$ are defined in the following proof from equation (\ref{app:eq:def_A}) to equation (\ref{app:eq:def_E}).
\end{theorem}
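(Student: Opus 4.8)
The plan is to run the standard potential-function (descent-lemma) argument of \cite{FedNova}, adapted to the randomized aggregation rule (\ref{app:eq:expected_client_contrib}), working with the surrogate objective $\tilde{\Lcal}(\vx) = \sum_{i=1}^{n} w_i \Lcal_i(\vx)$ introduced in Section~\ref{app:sec:notation}, which is again $L$-smooth by Assumption~\ref{ass:smoothness} since it is a convex combination of the $\Lcal_i$. Applying smoothness along the update $\vtheta^{t+1}-\vtheta^t = -\tilde\eta\sum_{i=1}^n \omega_i a_i \vd_i^t$ and taking expectation conditioned on the history up to round $t$ gives
\[
\E{\tilde{\Lcal}(\vtheta^{t+1})} \le \E{\tilde{\Lcal}(\vtheta^t)}
+ \underbrace{\E{\inner{\nabla\tilde{\Lcal}(\vtheta^t)}{\vtheta^{t+1}-\vtheta^t}}}_{T_1}
+ \frac{L}{2}\,\underbrace{\E{\norm{\vtheta^{t+1}-\vtheta^t}^2}}_{T_2}.
\]

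For $T_1$ I would take expectation first over the sampling $S_t$ (legitimate since, as noted in Section~\ref{app:sec:notation}, the sampling statistics are independent of the round) and then over the mini-batch noise; by (\ref{app:eq:expected_client_contrib}) and Assumption~\ref{ass:unbiased} this yields $T_1 = -\tilde\eta\, K_{eff}\,\E{\inner{\nabla\tilde{\Lcal}(\vtheta^t)}{\sum_i w_i \vh_i^t}}$. The polarization identity $\inner{\vu}{\vv} = \tfrac12(\norm{\vu}^2 + \norm{\vv}^2 - \norm{\vu-\vv}^2)$ then splits $T_1$ into $-\tfrac12\tilde\eta K_{eff}\E{\norm{\nabla\tilde{\Lcal}(\vtheta^t)}^2}$ (the quantity we ultimately bound), a harmless negative term $-\tfrac12\tilde\eta K_{eff}\E{\norm{\sum_i w_i\vh_i^t}^2}$, and a positive client-drift term $+\tfrac12\tilde\eta K_{eff}\E{\norm{\nabla\tilde{\Lcal}(\vtheta^t)-\sum_i w_i\vh_i^t}^2}$, which after Jensen over the $w_i$ is controlled by Lemma~\ref{lem:dif_grads} (contributing $\eta_l^2\sigma^2$ and $\kappa^2$ pieces plus an $\Ocal(\eta_l^2 L^2)$ multiple of $\E{\norm{\nabla\tilde{\Lcal}(\vtheta^t)}^2}$).

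For $T_2$ I would first peel off the stochastic-gradient noise: writing $\vd_i^t = \vh_i^t + (\vd_i^t-\vh_i^t)$ and using conditional unbiasedness and the variance bound of Assumption~\ref{ass:unbiased}, $T_2$ splits into a pure-noise term of order $\tilde\eta^2\sigma^2\sum_i(\VAR{\omega_i(S_t)}+p_i^2)(\norm{\va_i}_2^2-a_{i,-1}^2)$ — which after the final normalization produces the $\tilde\eta\,\tfrac1m A'\sigma^2$ term — plus $\tilde\eta^2\E{\norm{\sum_i \omega_i a_i\vh_i^t}^2}$. Applying Lemma~\ref{lem:decompo_Xi} with $\vx_i = a_i\vh_i^t$ decomposes the latter into $\sum_i\gamma_i\E{\norm{a_i\vh_i^t}^2}$, bounded exactly by Lemma~\ref{lem:contrib_client_sampling} (producing the $\gamma$-dependent $\sigma^2$ and $\kappa^2$ terms together with a further $\Ocal$-controlled multiple of $\E{\norm{\nabla\tilde{\Lcal}(\vtheta^t)}^2}$), and $(1-\alpha)K_{eff}^2\E{\norm{\sum_i w_i\vh_i^t}^2}$, whose coefficient is $\Ocal(\tilde\eta L)$ smaller than the negative term from $T_1$ and is therefore absorbed against it (or, alternatively, bounded via $2\norm{\nabla\tilde{\Lcal}(\vtheta^t)}^2 + 2\norm{\nabla\tilde{\Lcal}(\vtheta^t)-\sum_i w_i\vh_i^t}^2$ and Lemma~\ref{lem:dif_grads} again).

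Collecting everything, the per-round inequality has the form $\E{\tilde{\Lcal}(\vtheta^{t+1})} \le \E{\tilde{\Lcal}(\vtheta^t)} - \tfrac12\tilde\eta K_{eff}(1-\Omega)\E{\norm{\nabla\tilde{\Lcal}(\vtheta^t)}^2} + (\text{noise and drift terms})$, where $\Omega$ aggregates all positive $\E{\norm{\nabla\tilde{\Lcal}(\vtheta^t)}^2}$ coefficients arising from the two drift terms and from Lemma~\ref{lem:contrib_client_sampling}; each is $\Ocal(\eta_l^2 L^2)$ times a sampling- and $\va_i$-dependent factor, so $\Omega<1$ once $\eta_l = \Ocal(1/(LK))$. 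Rearranging, summing over $t=0,\dots,T-1$, telescoping the potential (bounding $\E{\tilde{\Lcal}(\vtheta^0)}-\E{\tilde{\Lcal}(\vtheta^T)}$ by the initial suboptimality), and dividing by $\tfrac12\tilde\eta K_{eff}(1-\Omega)T$ gives the stated bound, with $A'$--$E'$ read off as the coefficients of $\tilde\eta\tfrac1m\sigma^2$, $\eta_l^2\sigma^2$, $\eta_l^2\kappa^2$, $\tilde\eta\sigma^2$ and $\tilde\eta\kappa^2$. I expect the main obstacle to be purely bookkeeping: carrying the $\va_i$-norms ($\norm{\va_i}_1$, $\norm{\va_i}_2$, $a_{i,-1}$), the sampling quantities $\gamma_i,\alpha,\Sigma$, and $K_{eff}=\sum_i p_i a_i$ consistently through Lemmas~\ref{lem:decompo_Xi}--\ref{lem:contrib_client_sampling} so that, specialized to \textsc{FedAvg} ($\va_i=\vone$, $a_i=K$), the coefficients collapse to the $\Ocal(\cdot)$ terms of Theorem~\ref{theo:convergence_paper}, and verifying that the learning-rate constraints $R<1$, $R'<1$, $\Omega<1$ can be met simultaneously.
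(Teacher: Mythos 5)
Your proposal follows essentially the same route as the paper's proof: the $L$-smoothness descent inequality on the surrogate $\tilde{\Lcal}$, the polarization identity on $T_1$ with Lemma~\ref{lem:dif_grads} controlling the drift, the noise/signal split of $T_2$ with the cross term vanishing by Assumption~\ref{ass:unbiased}, Lemma~\ref{lem:decompo_Xi} applied to $\vx_i=a_i\vh_i^t$ followed by Lemma~\ref{lem:contrib_client_sampling}, absorption of the $(1-\alpha)K_{eff}^2$ term into the negative term from $T_1$ under the stated step-size condition, and the final $\Omega$-rearrangement, telescoping, and averaging. The only (immaterial) bookkeeping difference is that the paper bounds the pure-noise term by $\sum_i\E{\omega_i^2}\norm{\va_i}_2^2\sigma^2$ without the $-a_{i,-1}^2$ correction, which does not affect the $\Ocal(\cdot)$ statement.
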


\begin{proof}

Clients local loss functions are $L$-Lipschitz smooth. Therefore, $\tilde{\Lcal}$ is also $L$-Lipschitz smooth which gives
\begin{equation}
	\E{\tilde{\Lcal}(\vtheta^{t+1}) - \tilde{\Lcal}(\vtheta^t)}
	\le \underbrace{\E{\inner{\nabla \tilde{\Lcal}(\vtheta^t)}{\vtheta^{t+1} - \vtheta^t}}}_{T_1}
	+ \frac{L}{2}\underbrace{\E{\norm{\vtheta^{t+1} - \vtheta^t }^2}}_{T_2}
,
\label{app:eq:B1}
\end{equation}
where the expectation is taken over the subset of randomly sampled clients $S_t$ and the clients gradient estimator noises $\xi_i^t$. Please note that we use the notation $\E{\cdot}$ instead of $\EE{\{\xi_i^t\}, S_t}{\cdot}$ for ease of writing.

\subsubsection*{Bounding $T_1$}
By conditioning on $\{\xi_i^t \}$ and using equation (\ref{app:eq:expected_client_contrib}), we get:
\begin{equation}
	T_1 
	= \E{\inner{\nabla \tilde{\Lcal}(\vtheta^t)}{{\EE{S_t}{\vtheta^{t+1} - \vtheta^t}}}} 
	= - \tilde{\eta} K_{eff} \E{\inner{\nabla \tilde{\Lcal}(\vtheta^t)}{ \sum_{i=1}^{n} w_i \vh_i^t}},
\end{equation}
which, using $2 \inner{a}{b} = \norm{a}^2 + \norm{b}^2 - \norm{a - b}^2$ can be rewritten as:
\begin{equation}
T_1 
= - \frac{1}{2}\tilde{\eta} K_{eff} \E{ \norm{\nabla \tilde{\Lcal}(\vtheta^t)}^2 
+ \norm{\sum_{i=1}^{n} w_i \vh_i^t}^2 
- \norm{\nabla \tilde{\Lcal}(\vtheta^t) -  \sum_{i=1}^{n} w_i \vh_i^t}^2 }.
\label{app:eq:BB1}
\end{equation}

\subsubsection*{Bounding $T_2$}

\begin{align}
	T_2 | S_t
	& = \tilde{\eta}^2 \E{\norm{\sum_{i=1}^{n} \omega_i a_i \vd_i^t}^2|S_t}
	\\
	& = \tilde{\eta}^2 \E{\norm{\sum_{i=1}^{n} \omega_i a_i \left(\vd_i^t - \vh_i^t\right) 
			+ \sum_{i=1}^{n} \omega_i a_i \vh_i^t}^2|S_t}
	\\
	& = \tilde{\eta}^2 \E{\norm{\sum_{i=1}^{n} \omega_i a_i \left(\vd_i^t - \vh_i^t\right) }^2 |S_t }
	+\tilde{\eta}^2 \E{\norm{\sum_{i=1}^{n} \omega_i a_i \vh_i^t}^2|S_t}
	\nonumber\\
	& + 2 \tilde{\eta} \underbrace{\E{\inner{\sum_{i=1}^{n} \omega_i a_i \left(\vd_i^t - \vh_i^t\right)}{\sum_{i=1}^{n} \omega_i a_i \vh_i^t}|S_t}}_U
	.
	\label{app:eq:B2}
\end{align}
Using Assumption \ref{ass:unbiased}, we have $\E{\inner{d_i^t - h_i^t}{h_j^t}} = 0$. Hence, we get $U=0$ and can simplify $T_2$ as: 
\begin{align}
T_2 
& = \tilde{\eta}^2 \sum_{i=1}^{n} \E{\omega_i^2} a_i^2\E{\norm{ \vd_i^t - \vh_i^t }^2 }
+\tilde{\eta}^2 \E{\norm{\sum_{i=1}^{n} \omega_i a_i \vh_i^t}^2}
.
\label{app:eq:B3}
\end{align}

Using Lemma \ref{lem:decompo_Xi} on the second term, we get:
\begin{align}
T_2 
& = \tilde{\eta}^2 \sum_{i=1}^{n} \E{\omega_i^2} a_i^2\E{\norm{ \vd_i^t - \vh_i^t }^2 }
+\tilde{\eta}^2 \sum_{i=1}^{n} \gamma_i \E{\norm{ a_i \vh_i^t}^2}
+\tilde{\eta}^2 (1- \alpha)\E{\norm{\sum_{i=1}^{n} p_i a_i \vh_i^t}^2}
.
\label{app:eq:B4}
\end{align}

Finally, by bounding the first term using Assumption \ref{ass:unbiased}, and noting that $p_i a_i = w_i K_{eff}$ for the second term, we get:
\begin{align}
T_2 
& = \tilde{\eta}^2 \sum_{i=1}^{n} \E{\omega_i^2} \sum_{k=0}^{K-1} a_{i, k}^2 \E{\norm{ g_i(\vy_{i, k}^t) - \nabla \Lcal_i(\vy_{i, k}^t) }^2 }
\nonumber\\
&+\tilde{\eta}^2 \sum_{i=1}^{n} \gamma_i \E{\norm{ a_i \vh_i^t}^2}
+\tilde{\eta}^2 (1- \alpha)K_{eff}^2\E{\norm{\sum_{i=1}^{n} w_i \vh_i^t}^2}
\\
& \le \tilde{\eta}^2 \sum_{i=1}^{n} \E{\omega_i^2} \norm{\va_i}_2^2 \sigma^2
+\tilde{\eta}^2 \sum_{i=1}^{n} \gamma_i \E{\norm{ a_i \vh_i^t}^2}
+\tilde{\eta}^2 (1- \alpha)K_{eff}^2\E{\norm{\sum_{i=1}^{n} w_i \vh_i^t}^2}
.
\label{app:eq:B5}
\end{align}

\subsubsection*{Going back to equation (\ref{app:eq:B1})}

Substituting equation (\ref{app:eq:BB1}) and equation (\ref{app:eq:B5}) back in equation (\ref{app:eq:B1}), we get:
\begin{align}
	\E{\tilde{\Lcal}(\vtheta^{t+1}) - \tilde{\Lcal}(\vtheta^t)}
	&\le - \frac{1}{2}\tilde{\eta} K_{eff} \norm{\nabla \tilde{\Lcal}(\vtheta^t)}^2 
	+ \frac{1}{2}\tilde{\eta} K_{eff} \E{\norm{\nabla \tilde{\Lcal}(\vtheta^t) -  \sum_{i=1}^{n} w_i \vh_i^t}^2}
	\nonumber\\
	& - \frac{1}{2}\tilde{\eta} K_{eff} \left[1 - L\tilde{\eta}(1-\alpha) K_{eff}\right] \E{ \norm{\sum_{i=1}^{n} w_i \vh_i^t}^2 }
	\nonumber\\
	&+\frac{L}{2} \tilde{\eta}^2 \sum_{i=1}^{n} \E{\omega_i^2} \norm{\va_i}_2^2 \sigma^2
	+ \frac{L}{2} \tilde{\eta}^2 \sum_{i=1}^{n} \gamma_i \E{\norm{ a_i \vh_i^t}^2}
	,
	\label{app:eq:B55}
\end{align}

We consider the learning rate to satisfy $1 - L\tilde{\eta}(1-\alpha) K_{eff}>0$ such that we can simplify equation (\ref{app:eq:B55}) as :

\begin{align}
\frac{\E{\tilde{\Lcal}(\vtheta^{t+1}) - \tilde{\Lcal}(\vtheta^t)}}{\tilde{\eta} K_{eff}}
&\le - \frac{1}{2}\norm{\nabla \tilde{\Lcal}(\vtheta^t)}^2 
+ \frac{1}{2}\E{\norm{\nabla \tilde{\Lcal}(\vtheta^t) -  \sum_{i=1}^{n} w_i \vh_i^t}^2}
\nonumber\\
&+\frac{L}{2} \tilde{\eta} \frac{1}{K_{eff}} \sum_{i=1}^{n} \E{\omega_i^2} \norm{\va_i}_2^2 \sigma^2
+ \frac{L}{2} \tilde{\eta} \frac{1}{K_{eff}} \sum_{i=1}^{n} \gamma_i \E{\norm{ a_i \vh_i^t}^2}
\\
&\le - \frac{1}{2}\norm{\nabla \tilde{\Lcal}(\vtheta^t)}^2 
+ \frac{1}{2} \sum_{i=1}^{n} w_i \E{ \norm{\nabla \Lcal_i(\vtheta^t) -   \vh_i^t}^2}
\nonumber\\
&+\frac{L}{2} \tilde{\eta} \frac{1}{K_{eff}} \sum_{i=1}^{n} \E{\omega_i^2} \norm{\va_i}_2^2 \sigma^2
+ \frac{L}{2} \tilde{\eta} \frac{1}{K_{eff}} \sum_{i=1}^{n} \gamma_i \E{\norm{ a_i \vh_i^t}^2}
,
\label{app:eq:B6}
\end{align}
where the last inequality uses the definition of the surrogate loss function $\tilde{\Lcal}$ and the Jensen's inequality.

Using Lemma \ref{lem:contrib_client_sampling} and \ref{lem:dif_grads}, we get:
\begin{align}
\frac{\E{\tilde{\Lcal}(\vtheta^{t+1}) - \tilde{\Lcal}(\vtheta^t)}}{\tilde{\eta} K_{eff}}
&\le - \frac{1}{2}\norm{\nabla \tilde{\Lcal}(\vtheta^t)}^2 
+ \frac{1}{2} \frac{\eta_l^2 L^2 \sigma^2}{1 -R} \sum_{i=1}^{n} w_i \left( \norm{\va_i}_2^2 - a_{i, -1}^2\right)
\nonumber\\
&+ \frac{R \beta^2 }{2 (1 - R)}\E{\norm{\nabla \tilde{\Lcal}(\vtheta^t)}^2}
+ \frac{R \kappa^2 }{2 ( 1 -R)}
\nonumber\\
&+\frac{L}{2} \tilde{\eta} \frac{1}{K_{eff}} \left[\sum_{i=1}^{n} \E{\omega_i^2} \norm{\va_i}_2^2 
+  \frac{1}{1 - R}  \sum_{i=1}^{n} \gamma_i \left(\norm{\va_i}_2^2 - (a_{i, -1}^2) \right)\right] \sigma^2
\nonumber\\
&+ L \tilde{\eta} \frac{1}{K_{eff}}   \left[ \frac{R}{1 - R} +1 \right] \left(\sum_{i=1}^{n}\gamma_i a_i^2\right) \left(\beta^2 \E{\norm{\nabla \tilde{\Lcal} (\vtheta^t)}^2} + \kappa^2 \right)
\label{app:eq:B66}
.
\end{align}

If we assume that $R \le \frac{1}{2 \beta^2 +1}$, and considering that $\beta^2 \ge 1$, then we have $\frac{1}{1 - R} \le 1 + \frac{1}{2 \beta^2}\le \frac{3}{2}$, $\frac{R}{1-R} \le \frac{1}{2}$ , 
and $\frac{R \beta^2}{1 -R} \le \frac{1}{2 \beta^2 + 1} (1 + \frac{1}{2 \beta^2})\beta^2 = \frac{1}{2}$.
We also define $\Omega = L \tilde{\eta} \frac{1}{K_{eff}}   \frac{3}{2} \left(\sum_{i=1}^{n}\gamma_i a_i^2\right)\beta^2 \le \frac{1}{2}$. Substituting these terms in equation (\ref{app:eq:B66}) gives

\begin{align}
\frac{\E{\tilde{\Lcal}(\vtheta^{t+1}) - \tilde{\Lcal}(\vtheta^t)}}{\tilde{\eta} K_{eff}}
&\le - \frac{1}{4}\left[1 - \Omega \right]\norm{\nabla \tilde{\Lcal}(\vtheta^t)}^2 
+ \frac{3}{4} \eta_l^2 L^2\sum_{i=1}^{n} w_i \left( \norm{\va_i}_2^2 - a_{i, -1}^2\right) \sigma^2 
\nonumber\\
&+\frac{L}{2} \tilde{\eta} \frac{1}{K_{eff}} \left[\sum_{i=1}^{n} \E{\omega_i^2} \norm{\va_i}_2^2 
+ \frac{3}{2}\sum_{i=1}^{n} \gamma_i \left(\norm{\va_i}_2^2 - (a_{i, -1}^2) \right)\right] \sigma^2
\nonumber\\
&+ \frac{3}{2}\eta_l^2 L^2\max_i\{a_i (a_i - a_{i, -1})\} \kappa^2 
+ \frac{3}{2}L \tilde{\eta} \frac{1}{K_{eff}}   \left(\sum_{i=1}^{n}\gamma_i a_i^2\right)  \kappa^2
.
\end{align}

%
%

Averaging across all rounds, we get:
\begin{align}
\frac{1 - \Omega}{T}\sum_{t=0}^{T-1} \E{\norm{\nabla \tilde{\Lcal}(\vtheta^t)}^2}
&\le 4   \frac{\tilde{\Lcal}(\theta^0) - \tilde{\Lcal}(\theta^*)}{\tilde{\eta} K_{eff}T} 
+ 3 \eta_l^2 L^2 \sum_{i=1}^{n} w_i \left( \norm{\va_i}_2^2 - a_{i, -1}^2\right) \sigma^2
\nonumber\\
&+ L \tilde{\eta} \frac{1}{K_{eff}} \left[2 \sum_{i=1}^{n} \E{\omega_i^2} \norm{\va_i}_2^2
+ 3  \sum_{i=1}^{n} \gamma_i \left(\norm{\va_i}_2^2 - (a_{i, -1}^2) \right)\right] \sigma^2
\nonumber\\
&+ 6\eta_l^2 L^2\max_i\{a_i (a_i - a_{i, -1})\} \kappa^2 
+ 6 L \tilde{\eta} \frac{1}{K_{eff}}   \left(\sum_{i=1}^{n}\gamma_i a_i^2\right)  \kappa^2.
\end{align}

We define the following auxiliary variables
\begin{equation}
	A 
	= m \frac{1}{K_{eff}} \sum_{i=1}^{n} \E{\omega_i^2} \norm{\va_i}_2^2 
	= m \frac{1}{\sum_{i=1}^{n}p_i a_i} \sum_{i=1}^{n} \left[\VAR{\omega_i} + p_i^2\right] \norm{\va_i}_2^2 ,
	\label{app:eq:def_A}
\end{equation}
\begin{equation}
B
= \sum_{i=1}^{n} w_i \left( \norm{\va_i}_2^2 - a_{i, -1}^2\right)
= \sum_{i=1}^{n}\frac{p_i a_i}{\sum_{j=1}^{n} p_j a _j}  \left( \norm{\va_i}_2^2 - a_{i, -1}^2\right)
,
\end{equation}
\begin{equation}
C
= \max_i\{a_i (a_i - a_{i, -1})\}
,
\end{equation}
\begin{equation}
D 
= \frac{1}{K_{eff}} \max_i\{a_i (a_i - a_{i, -1})\} \sum_{i=1}^{n}\gamma_i  
= \frac{1}{\sum_{i=1}^{n}p_i a_i} C \left( \sum_{i=1}^{n} \VAR{\omega_i} + \alpha \sum_{i=1}^{n}p_i^2 \right)
,
\end{equation}
\begin{equation}
E 
= \frac{1}{K_{eff}}\max_{i} \{a_i^2\}\left(\sum_{i=1}^{n}\gamma_i \right) 
= \frac{1}{\sum_{i=1}^{n}p_i a_i} \max_{i} \{a_i^2\}\left(\sum_{i=1}^{n} \VAR{\omega_i} + \alpha \sum_{i=1}^{n} p_i^2 \right)
\label{app:eq:def_E}
.
\end{equation}


We define for $A$ -$E$ the respective quantities $A'$-$E'$ such that $X' = \frac{1}{1 -\Omega} X$. We have:
\begin{align}
\frac{1}{T}\sum_{t=0}^{T-1} \E{\norm{\nabla \tilde{\Lcal}(\vtheta^t)}^2}
&\le 4   \frac{\tilde{\Lcal}(\theta^0) - \tilde{\Lcal}(\theta^*)}{( 1- \Omega) \tilde{\eta} \left(\sum_{i=1}^{n} p_i a_i\right) T} 
+ 2L \tilde{\eta} \frac{1}{m} A' \sigma^2
+ 3 \eta_l^2 L^2 B'  \sigma^2 
\nonumber\\
& + 6\eta_l^2 L^2 C' \kappa^2 
+ 3L \tilde{\eta} D\sigma^2
+ 6 L \tilde{\eta} E \kappa^2
,
\label{app:eq:end_intermediary}
\end{align}

\end{proof}

\subsection{Synthesis of local learning rate $\eta_l$ conditions for Theorem \ref{theo:intermediary}}

{
A sufficient bound on the local learning rate $\eta_l$ for constraints on $R$ for Lemma \ref{lem:dif_grads} and equation (\ref{app:eq:B66}), and constraint on $R'$ for Lemma \ref{lem:contrib_client_sampling}  to be satisfied is:
\begin{equation}
    2 \left[ 2 \beta^2 + 1\right]\eta_l^2 L^2 \max_i \{ \norm{a_i}_1^2 \} <1.
\end{equation}
Constraints on equation (\ref{app:eq:B55}) can be simplified as
\begin{equation}
    L \eta_g \eta_l (1-\alpha) K_{eff}< 1.
\end{equation}
    
Constraints on $\Omega$, equation (\ref{app:eq:B66}), give
\begin{equation}
    3 L \eta_g \eta_l \frac{1}{K_{eff}}  \left(\sum_{i=1}^{n}\gamma_i a_i^2\right)\beta^2 \le 1.
\end{equation}
}

\subsection{Theorem \ref{theo:convergence_paper}}\label{app:sec:proof_convergence}

\begin{proof}
With \textsc{FedAvg}, every client performs vanilla SGD. As such, we have $a_{i, k} = 1$ which gives $a_i = K$ and $\norm{a_i}_2 = \sqrt{K}$. In addition we consider a local learning rate $\eta_l$ such that $\Omega \le \frac{1}{2}$ as such we can bound $A'$-$E'$ as $ X' \le 2 X$. 

Finally, considering that the variables $A$ to $E$ can be simplified as
\begin{equation}
A 
= m \sum_{i=1}^{n} \left[\VAR{\omega_i} + p_i^2\right]
,
B
= (K -1)
,
C
= K (K-1)
,
\end{equation}
\begin{equation}
D 
= (K - 1) \left( \sum_{i=1}^{n} \VAR{\omega_i} + \alpha \sum_{i=1}^{n} p_i^2  \right)
,
\text{ and }
E 
= K  \left( \sum_{i=1}^{n} \VAR{\omega_i} + \alpha \sum_{i=1}^{n} p_i^2 \right),
\end{equation}
the convergence bound of Theorem \ref{theo:intermediary} can be reduced to
\begin{align}
\frac{1}{T}\sum_{t=0}^{T-1} \E{\norm{\nabla \Lcal(\vtheta^t)}^2}
&\le \Ocal \left( \frac{1}{\eta_g \eta_l K T} \right)
+  \Ocal \left( \eta_g \eta_l \sum_{i=1}^{n} \left[\VAR{\omega_i} + p_i^2\right] \sigma^2 \right)
\nonumber\\
&+ \Ocal \left(  \eta_l^2 (K-1) \sigma^2 \right)
+ \Ocal \left( \eta_l^2 K(K-1) \kappa^2 \right)
\nonumber\\
& + \Ocal \left( \eta_g \eta_l \left(\sum_{i=1}^{n}\VAR{\omega_i} + \alpha \sum_{i=1}^{n} p_i^2  \right) \left[(K-1)\sigma^2 + K \kappa^2 \right] \right)
\label{app:eq:F1}
,
\end{align}
which completes the proof.

\end{proof}    

$\Omega$ is proportional to $\sum_{i=1}^{n}\gamma_i = \sum_{i=1}^{n}\VAR{\omega_i} + \alpha \sum_{i=1}^{n} p_i^2 $. With full participation, we have $\Omega = 0$. However, with client sampling, all the terms in equation (\ref{app:eq:F1}) are proportional with $\frac{1}{1 - \Omega}$. Yet, we provide a looser bound in equation (\ref{app:eq:F1}) independent from $\Omega$ as the conclusions drawn are identical. Through $\Omega$, $\sum_{i=1}^{n}\VAR{\omega_i}$ and  $\alpha$ needs to be minimized. This fact is already visible by inspection of the quantities $E$ and $F$.

We note that equation (\ref{app:eq:F1}) depends on client sampling through $\sigma^2$, which is an indicator of the clients SGD quality, and $\kappa^2$, which depends on the clients data heterogeneity. In the special case where clients have the same data distribution and perform full gradient descent, based on the arguments discussed in the previous paragraph, we can still provide the following bound showing the influence of client sampling on the convergence speed, while highlighting the interest of minimizing the quantities $\sum_{i=1}^{n}\VAR{\omega_i}$ and  $\alpha$.
\begin{align}
\frac{1}{T}\sum_{t=0}^{T-1} \E{\norm{\nabla \Lcal(\vtheta^t)}^2}
&\le \Ocal \left( \frac{1}{(1 - \Omega)\eta_g \eta_l K T} \right)
,
\end{align}

When setting the server learning rate at 1, $\eta_g=1$ with client full participation, i.e. $\VAR{\omega_i} = \VAR{\sum_{i=1}^{n}\omega_i} = \alpha = 0$ and $m=n$, we have $E= F=0$ and can simplify $A$ to                          
\begin{equation}                                                                                                         
A
= n \sum_{i=1}^{n} p_i^2                                                                                                                                                                                                              
.                                                                                                                                                                                                                                     
\end{equation}                                                                                                                                                                                                                        
Therefore, the convergence guarantee we provide is $\frac{1}{\eta_lKT} + \eta_l \sum_{i=1}^{n}p_i^2 \sigma^2 + \eta_l^2 (K-1) \sigma^2 + \eta_l^2 K(K-1) \kappa^2 $, which is identical to the one of \cite{FedNova} (equation (97) in their work), where $\sum_{i=1}^{n}p_i^2$ can be replaced by $1/n$ when clients have identical importance, i.e. $p_i = 1/n$.

In the special case, where we use $\eta_l = \sqrt{m/KT}$ \citep{FedNova}, we retrieve their asymptotic convergence bound $\frac{1}{\sqrt{mKT}} + \sqrt{\frac{m}{KT}} \sum_{i=1}^{n}p_i^2 \sigma^2 + \frac{m}{T} \sigma^2 + \frac{m}{T} K \kappa^2 $.

\subsection{Application to Clustered Sampling}\label{app:sec:clustered_convergence}

Instead of Lemma \ref{lem:decompo_Xi} which requires $\COV{\omega_i(S_t)}{\omega_j(S_t)} = - \alpha p_i p_j$, we propose the following Lemma for Clustered sampling expressed in function of MD sampling covariance parameter $\alpha_{MD}$ showing that a sufficient condition for MD sampling to perform as well as Clustered sampling is that all $\vx_i$ are identical, or that all the distributions are identical, i.e. $r_{k, i} = p_i$. 

\begin{lemma}\label{lem:decompo_Xiclustered}
	Let us consider $n$ vectors $\vx_i, ..., \vx_n$ and a Clustered sampling satisfying $\EE{S_t}{\omega_i(S_t)} = p_i$. We have:
	\begin{equation}
	\EE{S_{Cl}}{\norm{\sum_{i=1}^{n}\omega_i(S_{Cl})\vx_i}^2} 
	\le \sum_{i=1}^{n} \gamma_i(MD) \norm{\vx_i}^2
	+ (1 - \alpha_{MD})\norm{\sum_{i=1}^{n}p_i \vx_i}^2,
	\label{eq:decompo_clustered}
	\end{equation}
	where $\gamma_i(MD)$ and $\alpha_{MD}$ are the aggregation weights statistics of MD sampling. Equation (\ref{eq:decompo_clustered}) is an equality if and only if $\sum_{i=1}^{n} r_{k, i} \vx_i = \sum_{j=1}^{n} r_{k, j} \vx_j$.
\end{lemma}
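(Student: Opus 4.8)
The plan is to follow the template of the proof of Lemma~\ref{lem:decompo_Xi}, but replace the covariance assumption by the explicit first and second moments of the Clustered sampling weights already computed in Appendix~\ref{app:sec:clustered}. First I would expand the squared norm bilinearly,
\begin{equation}
\EE{S_{Cl}}{\norm{\sum_{i=1}^{n}\omega_i(S_{Cl})\vx_i}^2}
= \sum_{i=1}^{n}\E{\omega_i^2}\norm{\vx_i}^2
+ \sum_{i=1}^{n}\sum_{\substack{j=1\\ j\neq i}}^{n}\E{\omega_i\omega_j}\inner{\vx_i}{\vx_j},
\end{equation}
and substitute $\E{\omega_i^2}=\tfrac{1}{m}p_i-\tfrac{1}{m^2}\sum_{k=1}^{m}r_{k,i}^2+p_i^2$ together with $\E{\omega_i\omega_j}=-\tfrac{1}{m^2}\sum_{k=1}^{m}r_{k,i}r_{k,j}+p_ip_j$ for $i\neq j$, both of which are read off from the derivation in Appendix~\ref{app:sec:clustered}.

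The key step is the regrouping of the resulting terms into three blocks. The $p_i^2$ and $p_ip_j$ contributions reassemble into $\norm{\sum_{i=1}^{n}p_i\vx_i}^2$; the diagonal $\tfrac{1}{m}p_i$ contributions give $\sum_{i=1}^{n}\gamma_i(MD)\norm{\vx_i}^2$, using $\gamma_i(MD)=\VAR{\omega_i(S_{MD})}+\alpha_{MD}p_i^2=\tfrac{1}{m}p_i$; and the $-\tfrac{1}{m^2}r_{k,i}r_{k,j}$ and $-\tfrac{1}{m^2}r_{k,i}^2$ contributions assemble, for each cluster index $k$ separately, into the negative perfect square $-\tfrac{1}{m^2}\norm{\sum_{i=1}^{n}r_{k,i}\vx_i}^2$. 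This would produce the exact identity
\begin{equation}
\EE{S_{Cl}}{\norm{\sum_{i=1}^{n}\omega_i(S_{Cl})\vx_i}^2}
= \sum_{i=1}^{n}\gamma_i(MD)\norm{\vx_i}^2
+ \norm{\sum_{i=1}^{n}p_i\vx_i}^2
- \frac{1}{m^2}\sum_{k=1}^{m}\norm{\sum_{i=1}^{n}r_{k,i}\vx_i}^2.
\end{equation}

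It then remains to compare this with the claimed right hand side, which equals $\sum_{i=1}^{n}\gamma_i(MD)\norm{\vx_i}^2+(1-\tfrac{1}{m})\norm{\sum_{i=1}^{n}p_i\vx_i}^2$ since $\alpha_{MD}=\tfrac{1}{m}$. Subtracting, the desired inequality collapses to $m\norm{\sum_{i=1}^{n}p_i\vx_i}^2\le\sum_{k=1}^{m}\norm{\sum_{i=1}^{n}r_{k,i}\vx_i}^2$. Setting $\vv_k\coloneqq\sum_{i=1}^{n}r_{k,i}\vx_i$ and invoking the Clustered sampling constraint $\sum_{k=1}^{m}r_{k,i}=mp_i$ from (\ref{app:eq:clustered_prop}), one has $\sum_{i=1}^{n}p_i\vx_i=\tfrac{1}{m}\sum_{k=1}^{m}\vv_k$, so the inequality is precisely Jensen's inequality $\norm{\tfrac{1}{m}\sum_{k=1}^{m}\vv_k}^2\le\tfrac{1}{m}\sum_{k=1}^{m}\norm{\vv_k}^2$ for the convex function $\norm{\cdot}^2$, with equality iff all the $\vv_k$ coincide, i.e.\ iff $\sum_{i=1}^{n}r_{k,i}\vx_i$ does not depend on $k$, which is the stated equality condition. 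I expect the only delicate part to be the sign bookkeeping when collecting the $r_{k,i}r_{k,j}$ cross terms into the negative perfect square; once that identity is established, the inequality and its equality case are immediate.
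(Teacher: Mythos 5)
Your proposal is correct and follows essentially the same route as the paper's proof: both expand the second moment using the explicit Clustered-sampling weight statistics, arrive at the exact intermediate identity $\EE{S_{Cl}}{\|\sum_i\omega_i\vx_i\|^2}=\sum_i\gamma_i(MD)\|\vx_i\|^2+\|\sum_ip_i\vx_i\|^2-\frac{1}{m^2}\sum_k\|\sum_ir_{k,i}\vx_i\|^2$, and conclude with Jensen's inequality on the last term via $\sum_kr_{k,i}=mp_i$, with the same equality condition. No gaps.
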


\begin{proof}
	Substituting equation (\ref{app:eq:T1}) in equation (\ref{app:eq:AA1}) gives
	\begin{align}
	\EE{S_{Cl}}{\norm{\sum_{i=1}^{n}\omega_i(S_{Cl})\vx_i}^2} 
	& = \sum_{i=1}^{n} \EE{ S_{Cl} }{\omega_i( S_{Cl} )^2} \norm{\vx_i}^2
	+ \sum_{i=1}^{n}\sum_{\substack{j=1\\ j\neq i}}^{n} p_i p_j \inner{\vx_i}{\vx_j}
	- \frac{1}{m^2} \sum_{k=1}^{m} \sum_{i=1}^{n}\sum_{\substack{j=1\\ j\neq i}}^{n}  r_{k, i}r_{k, j} \inner{\vx_i}{\vx_j},
	\label{app:eq:U1}
	\end{align}
	
	Substituting equation (\ref{app:eq:AA3}) in equation (\ref{app:eq:AA1}) gives:
	\begin{align}
	\EE{ S_{Cl} }{\norm{\sum_{i=1}^{n}\omega_i( S_{Cl} )\vx_i}^2} 
	& = \sum_{i=1}^{n} \EE{ S_{Cl} }{\omega_i( S_{Cl} )^2} \norm{\vx_i}^2
	+ \norm{\sum_{i=1}^{n}p_i \vx_i }^2 
	- \sum_{i=1}^{n}p_i^2 \norm{ \vx_i }^2 
	\nonumber\\
	&- \frac{1}{m^2} \sum_{k=1}^m \left[\norm{\sum_{i=1}^{n}r_{k, i} \vx_i }^2 
	- \sum_{i=1}^{n} r_{k, i}^2 \norm{ \vx_i }^2 \right]
	\label{app:eq:U2}.
	\end{align}
	
	With rearrangements and using equation (\ref{app:eq:clustered_prop}) we get:
	\begin{align}
	\EE{ S_{Cl} }{\norm{\sum_{i=1}^{n}\omega_i( S_{Cl} )\vx_i}^2} 
	& = \sum_{i=1}^{n} \left[ \VAR{\omega_i( S_{Cl} )} + \frac{1}{m^2}\sum_{k=1}^m r_{k, i}^2 \right] \norm{\vx_i}^2
	+ \norm{\sum_{i=1}^{n}p_i \vx_i }^2 
    - \frac{1}{m^2} \sum_{k=1}^m \norm{\sum_{i=1}^{n}r_{k, i} \vx_i }^2. 
	\label{app:eq:U3}
	\end{align}
	
	Using the expression of clustered sampling variance for the first term (equation (\ref{app:eq:T2})),  and using Jensen's inequality on the third term completes the proof. Jensen's inequality is an equality if and only if $\sum_{i=1}^{n} r_{k, i} \vx_i = \sum_{j=1}^{n} r_{k, j} \vx_j$.
	
\end{proof}

We adapt Theorem \ref{theo:convergence_paper} to Clustered sampling. \cite{ClusteredSampling} prove the convergence of FL with clustered sampling by giving identical convergence guarantees to the one of FL with MD sampling. As a result, their convergence bound does not depend of the clients selection probability in the different clusters $r_{k, i}$. The authors' claim was that reducing the variance of the aggregation weights provides faster FL convergence, albeit only providing experimental proofs was provided to support this statement. Corollary \ref{cor:clustered} here proposed extends the theory of  \cite{ClusteredSampling} by theoretically demonstrating the influence of clustered sampling on the convergence rate. For easing the notation, Corollary \ref{cor:clustered} is adapted to \textsc{FedAvg} but can easily be extended to account for any local $\va_i$ using the proof of Theorem \ref{theo:intermediary} in Section \ref{app:sec:intermediary_theo}.

\begin{corollary}\label{cor:clustered}
	Even with no $\alpha$ such that $\COV{\omega_i(S_t)}{\omega_j(S_t)} = - \alpha p_i p_j$, the bound of Theorem \ref{theo:convergence_paper} still holds with $B$, $C$, and $D$ defined as in Section \ref{app:sec:intermediary_theo} and 
	\begin{equation}
	A = m \left[\frac{1}{m}
	- \frac{1}{m^2} \sum_{i=1}^{n} \sum_{k=1}^{m}r_{k, i}^2 + \sum_{i=1}^{n}p_i^2\right]
	,\
	E 
	= \frac{1}{m}(K-1)
	,
	\text{ and }
	F
	= \frac{1}{m}K
	,
	\end{equation}
	where $E$ and $F$ are identical to the one for MD sampling and $A$ is smaller than the one for Clustered sampling.
\end{corollary}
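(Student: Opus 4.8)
The plan is to re-run the proof of Theorem~\ref{theo:intermediary} (hence of Theorem~\ref{theo:convergence_paper}) essentially verbatim, changing it only at the two places where Assumption~\ref{ass:covariance} is invoked. First I would recall that in that proof the sampling covariance enters \emph{only} through Lemma~\ref{lem:decompo_Xi}: it is used to pass from equation~(\ref{app:eq:B3}) to equation~(\ref{app:eq:B4}) when bounding the term $T_2$, and the resulting $1-\alpha$ then reappears in the ``good'' negative term of equation~(\ref{app:eq:B55}). For Clustered sampling I would replace that single invocation by Lemma~\ref{lem:decompo_Xiclustered}, which bounds the very same quantity $\E{\norm{\sum_{i=1}^n \omega_i(S_{Cl}) a_i \vh_i^t}^2}$ \emph{from above} by $\sum_{i=1}^n \gamma_i(MD)\norm{a_i\vh_i^t}^2 + (1-\alpha_{MD})\norm{\sum_{i=1}^n p_i a_i \vh_i^t}^2$, where $\gamma_i(MD)$ and $\alpha_{MD}$ are the MD statistics. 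Since $T_2$ enters equation~(\ref{app:eq:B1}) with the positive coefficient $L/2$, replacing an equality by this upper bound is harmless; the mixed term $\norm{\sum_i w_i \vh_i^t}^2$ is, as before, dropped once the learning rate satisfies $1 - L\tilde\eta(1-\alpha_{MD})K_{eff}\ge 0$, a condition at least as mild as the one with $\alpha$ since $0\le\alpha_{MD}\le 1$. Every subsequent inequality of the proof then goes through with $\gamma_i$ and $\alpha$ replaced by $\gamma_i(MD)$ and $\alpha_{MD}$.

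Next I would handle the only remaining sampling-dependent term, $\tilde\eta^2\sum_{i=1}^n \E{\omega_i^2}\norm{\va_i}_2^2\sigma^2$ in equation~(\ref{app:eq:B5}); it is unrelated to the covariance assumption but depends on the variances $\VAR{\omega_i(S_{Cl})}$. Plugging in the Clustered-sampling variance formula~(\ref{app:eq:T1}), $\E{\omega_i^2} = \VAR{\omega_i(S_{Cl})} + p_i^2 = \tfrac1m p_i - \tfrac1{m^2}\sum_{k=1}^m r_{k,i}^2 + p_i^2$, is exactly what produces the stated $A$. I would also note that Lemma~\ref{lem:contrib_client_sampling}, applied with the nonnegative coefficients $\gamma_i(MD)$, holds unchanged, since its proof never uses Assumption~\ref{ass:covariance} — it treats the $\gamma_i$ as arbitrary nonnegative weights. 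Collecting terms exactly as in equations~(\ref{app:eq:def_A})--(\ref{app:eq:def_E}) (and absorbing the $(1-\Omega)^{-1}$ factors as in the proof of Theorem~\ref{theo:convergence_paper}) then yields the auxiliary variables of Theorem~\ref{theo:intermediary}, with $B$, $C$ (and the $w_i$-weighted variable) unchanged since they involve only the local updates $\va_i$, with $A$ as above, and with the two ``$\gamma$-type'' coefficients both proportional to $\sum_{i=1}^n\gamma_i(MD) = \tfrac1m$, using $\sum_i\VAR{\omega_i(MD)} = \tfrac1m(1-\sum_i p_i^2)$ and $\alpha_{MD}=\tfrac1m$ from Appendix~\ref{app:sec:MD}.

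Finally I would specialize to \textsc{FedAvg}, where $\va_i=[1,\dots,1]$, $a_i=K$, $\norm{\va_i}_2^2=K$ and $K_{eff}=K$, exactly as in the proof of Theorem~\ref{theo:convergence_paper}. This gives $A = m\big[\tfrac1m - \tfrac1{m^2}\sum_{i=1}^n\sum_{k=1}^m r_{k,i}^2 + \sum_{i=1}^n p_i^2\big]$, $B=K-1$, $C=K(K-1)$, $E=(K-1)\sum_i\gamma_i(MD)=\tfrac1m(K-1)$ and $F=K\sum_i\gamma_i(MD)=\tfrac1m K$, which are the claimed values; that $E,F$ agree with the MD values is immediate, and that $A$ is no larger than its MD counterpart follows from the Cauchy--Schwarz bound $\sum_{k=1}^m r_{k,i}^2\ge \tfrac1m(\sum_k r_{k,i})^2 = m p_i^2$ already exploited in~(\ref{app:eq:T1})--(\ref{app:eq:T2}). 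The one step that requires genuine care — the ``hard part'' — is the bookkeeping of signs: verifying that the quantity controlled by Lemma~\ref{lem:decompo_Xiclustered} only ever appears additively on the right-hand side of the telescoped descent inequality (so that an upper bound, rather than an identity, suffices), and that the learning-rate conditions of Appendix~\ref{app:sec:intermediary_theo} — which involved $1-\alpha$ and $\sum_i\gamma_i$ — remain valid after the substitution, both of which hold because the substituted quantities stay in $[0,1]$ and are bounded by $\tfrac1m$ respectively.
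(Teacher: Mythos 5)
Your proposal is correct and follows essentially the same route as the paper's proof: the covariance assumption enters only through Lemma~\ref{lem:decompo_Xi} at equation~(\ref{app:eq:B4}), and you replace that single step by the upper bound of Lemma~\ref{lem:decompo_Xiclustered} (harmless since $T_2$ appears with a positive coefficient), keep the remainder of the argument, and read off $A$ from the clustered-sampling variance and $E$, $F$ from the MD statistics $\sum_{i=1}^{n}\gamma_i(MD)=\tfrac{1}{m}$. The only difference is that you spell out the sign bookkeeping and the learning-rate conditions that the paper leaves implicit.
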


\begin{proof}
	The covariance property required for Theorem \ref{theo:intermediary} is only used for Lemma \ref{lem:decompo_Xi}. In the proof of Theorem \ref{theo:intermediary}, Lemma \ref{lem:decompo_Xi} is only used in equation (\ref{app:eq:B4}). We can instead use Lemma \ref{lem:decompo_Xiclustered} and keep the rest of the proof as it is in Section \ref{app:sec:intermediary_theo}. Therefore, the bound of Theorem \ref{theo:intermediary} remains unchanged for clustered sampling where $E$ and $F$ use the aggregation weight statistics of MD sampling instead of clustered sampling. Statistics for MD sampling can be found in Section \ref{app:sec:MD} and give
	\begin{equation}
	\VAR{\sum_{i=1}^{n}\omega_i(S_{MD})} =0
	\text{ and }
	\alpha_{MD} = \frac{1}{m}, 
	\end{equation}
	while the ones of clustered sampling in Section \ref{app:sec:clustered} give
	\begin{equation}
	\sum_{i=1}^{n} \VAR{\omega_i(S_{Cl})} 
	= \frac{1}{m}
	- \frac{1}{m^2} \sum_{i=1}^{n} \sum_{k=1}^{m}r_{k, i}^2
	\le \sum_{i=1}^{n} \VAR{\omega_i(S_{MD})} 
	.
	\end{equation}
	
\end{proof}


\subsection{ Proof of Corollary \ref{cor:sufficient_conditions}}\label{app:sec:sufficient_conditions} 

\begin{proof}
	
	Combining equation (\ref{app:eq:R1}) with equation (\ref{app:eq:S2}) gives
	\begin{align}
		\Sigma_{MD}
		- \Sigma_U
		= \left[ - \frac{1}{m} \sum_{i=1}^{n}p_i^2 + \frac{1}{m} \right]
		- \left( \frac{n}{m} - 1 \right) \sum_{i=1}^{n}p_i^2
		= - \frac{1}{m}\left[(n- m +1) \sum_{i=1}^{n} p_i^2 -1 \right]
		.
		\label{app:eq:N1}
	\end{align}
	Therefore, we have
	\begin{equation}
		\Sigma_{MD} \le \Sigma_U
		\Leftrightarrow
		\sum_{i=1}^{n}p_i^2 
		\le 
		\frac{1}{n - m +1}.
	\end{equation}
	
	Combining equation (\ref{app:eq:R2}), (\ref{app:eq:R3}), (\ref{app:eq:S5}), and (\ref{app:eq:S6}) gives
	\begin{align}
	\gamma_{MD} - \gamma_{U}
	= \sum_{i=1}^{n} \VAR{  \omega_i(S_{MD})} + \alpha_{MD} \sum_{i=1}^{n} p_i^2
	- \left( \sum_{i=1}^{n}   \VAR{ \omega_i(S_{U})} + \alpha_U \sum_{i=1}^{n}  p_i^2 \right)
	= \frac{1}{m} - \frac{n-m}{m (n-1)} n \sum_{i=1}^{n}p_i^2
	.
	\label{app:eq:N2}
	\end{align}
	
	Therefore, we have
	\begin{equation}
	\gamma_{MD} \le \gamma_U
	\Leftrightarrow
	\sum_{i=1}^{n}p_i^2 
	\le 
	\frac{1}{n-m} \frac{n-1}{n}
	.
	\end{equation}

	Noting that 
	\begin{equation}
		\frac{1}{n - m +1} - \frac{1}{n-m} \frac{n-1}{n} 
		= \frac{-m +1}{n (n-m )(n-m+1)}
		\le 0,
	\end{equation}
	completes the proof.

\end{proof}

\section{Additional experiments}\label{app:sec:additional_experiments}

\subsection{Shakespeare dataset}

The client local learning rate $\eta_l$ is selected in \{0.1, 0.5, 1., 1.5, 2., 2.5\} minimizing \textsc{FedAvg} with full participation, and $n = 80$ training loss at the end of the learning process.


\begin{figure}[H]
\begin{center}
	\includegraphics[width = 0.8\textwidth]{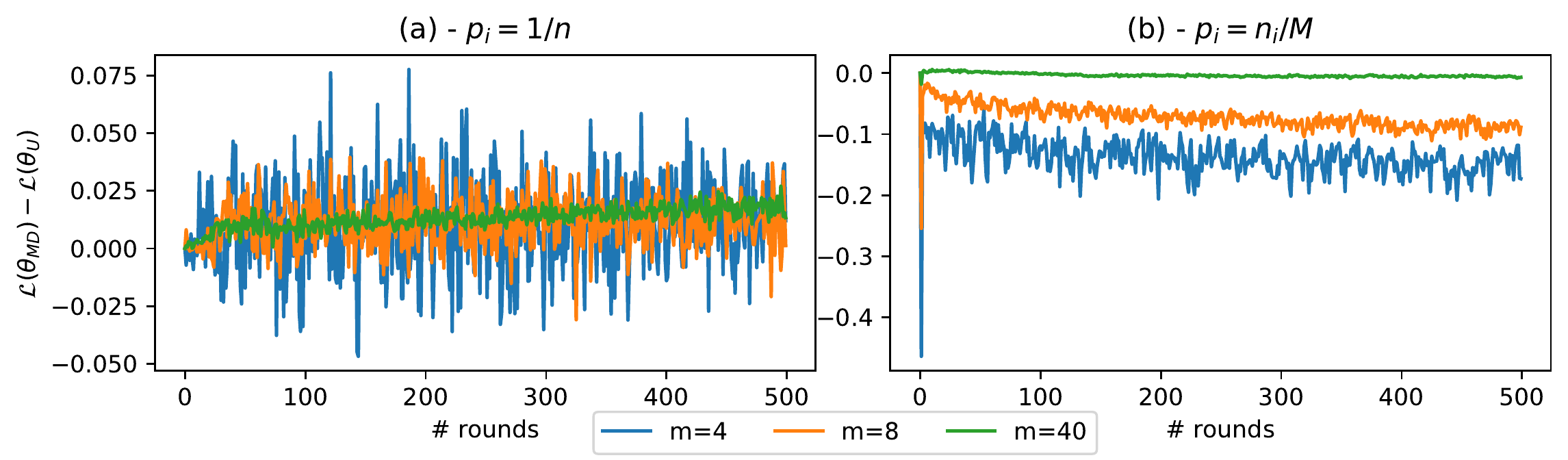}
	
\end{center}
	\caption{Difference between the convergence of the global losses resulting from MD and Uniform sampling when considering $n = 80$ clients and sampling $m \in \{4, 8, 40\}$ of them while clients perform $K=50$ SGD steps . In (a), clients have identical importance, i.e. $p_i = 1/n$. In (b), clients importance is proportional to their amount of data, i.e. $p_i = n_i / M$. Differences in global losses are averaged across 15 FL experiments with different model initialization (global losses are provided in Figure \ref{fig:Shak_diff_m}). }
	\label{fig:Shak_diff_m_synthesis}
\end{figure}

\begin{figure}[H]
    \begin{center}
	\includegraphics[width = 0.8\textwidth]{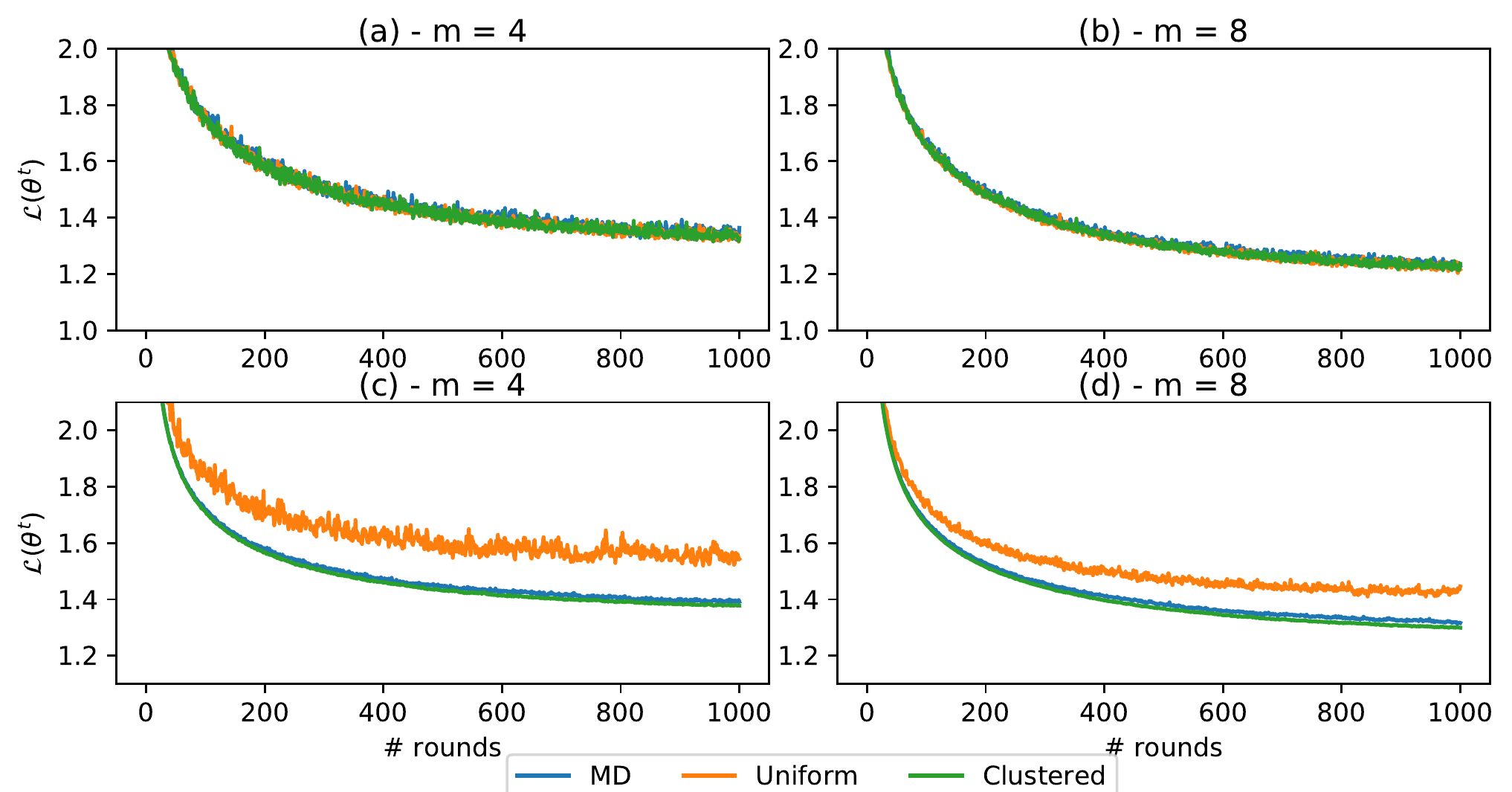}
	\end{center}
	\caption{Convergence speed of the global loss with MD sampling and Uniform sampling when considering $n=80$  clients while sampling $m=4$ ((a) and (c)), and $m=8$ ((b) and (d)) while clients perform $K=50$ SGD steps. In (a-b) , clients have identical importance, i.e. $p_i = 1/n$, and, in (d-f), their importance is proportional to their amount of data, i.e. $p_i = n_i / M$. Global losses are estimated on 15 different model initialization. }
	\label{fig:Shak_diff_m}
\end{figure}

\begin{figure}[H]
    \begin{center}
	\includegraphics[width = 0.8\textwidth]{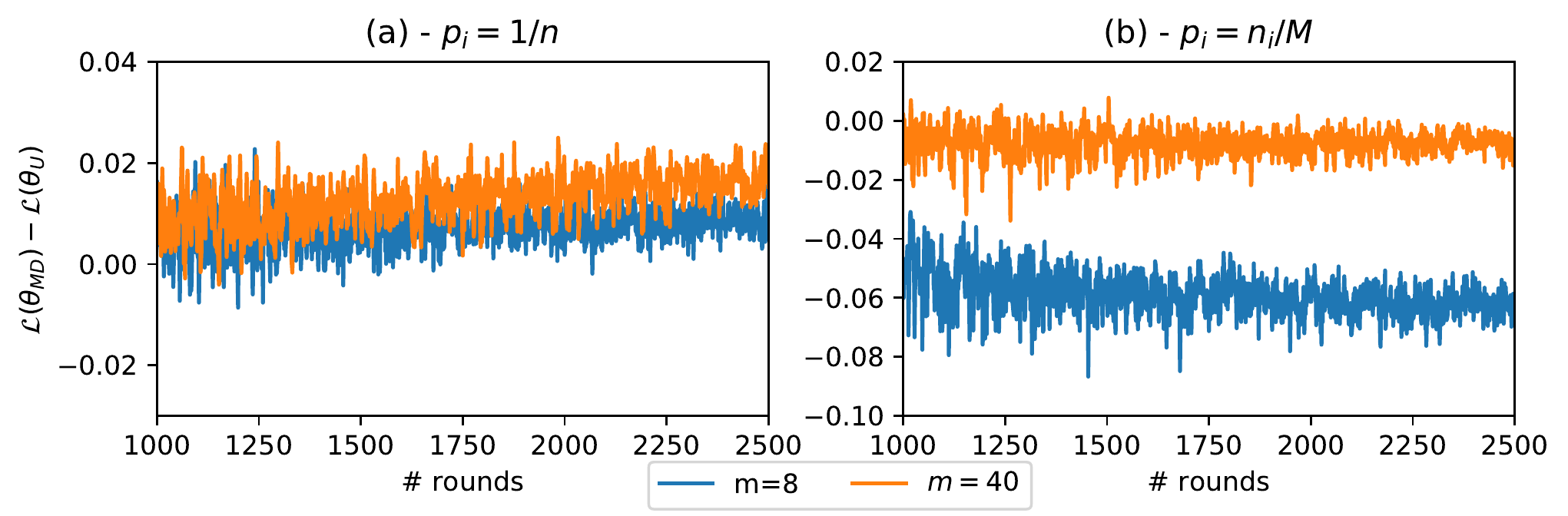}
	\end{center}
	\caption{Difference between the convergence of the global losses resulting from MD and Uniform sampling when considering $n = 80$ clients and sampling $m \in \{8, 40\}$ of them while clients perform $K=1$ SGD step. In (a), clients have identical importance, i.e. $p_i = 1/n$. In (b), clients importance is proportional to their amount of data, i.e. $p_i = n_i / M$. Differences in global losses are averaged across 15 FL experiments with different model initialization (global losses are provided in Figure \ref{fig:Shak_diff_K}). }
	\label{fig:Shak_diff_K_synthesis}
\end{figure}

\begin{figure}[H]
    \begin{center}
	\includegraphics[width = 0.65 \textwidth]{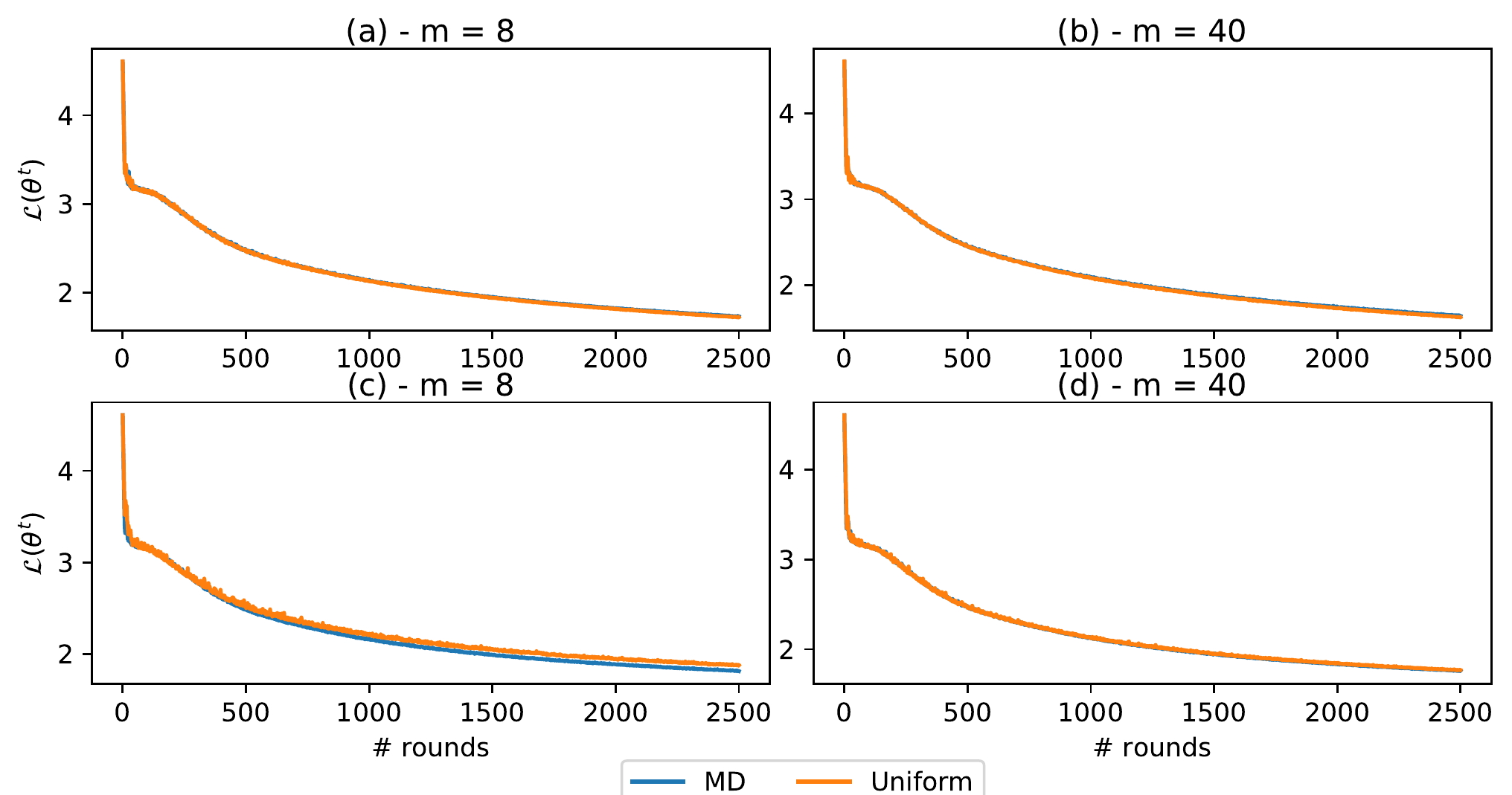}
	\end{center}
	\caption{Convergence speed of the global loss with MD sampling and Uniform sampling when considering $n=80$  clients while sampling $m=4$ ((a) and (d)), $m=8$ ((b) and (e)), $m=40$ ((c) and (f)) while clients perform $K=1$ SGD steps. In (a-c) , clients have identical importance, i.e. $p_i = 1/n$, and, in (d-f), their importance is proportional to their amount of data, i.e. $p_i = n_i / M$. Global losses are estimated on 15 different model initialization. }
	\label{fig:Shak_diff_K}
\end{figure}

\subsection{CIFAR10 dataset}

We consider the experimental scenario used to prove the experimental correctness of clustered sampling in \citep{ClusteredSampling} on CIFAR10 \citep{CIFAR-10}. The dataset is partitioned in $n=100$ clients using a Dirichlet distribution with parameter $\alpha =0.1$ as proposed in \cite{FL_and_CIFAR_dir}. 10, 30, 30, 20 and 10 clients have respectively 100, 250, 500, 750, and 1000 training samples, and testing samples amounting to a fifth of their training size.
The client local learning rate $\eta_l$ is selected in \{0.01, 0.02, 0.05, 0.1\}.

\begin{figure}[H]
    \centering
	\includegraphics[width = 0.35\textwidth]{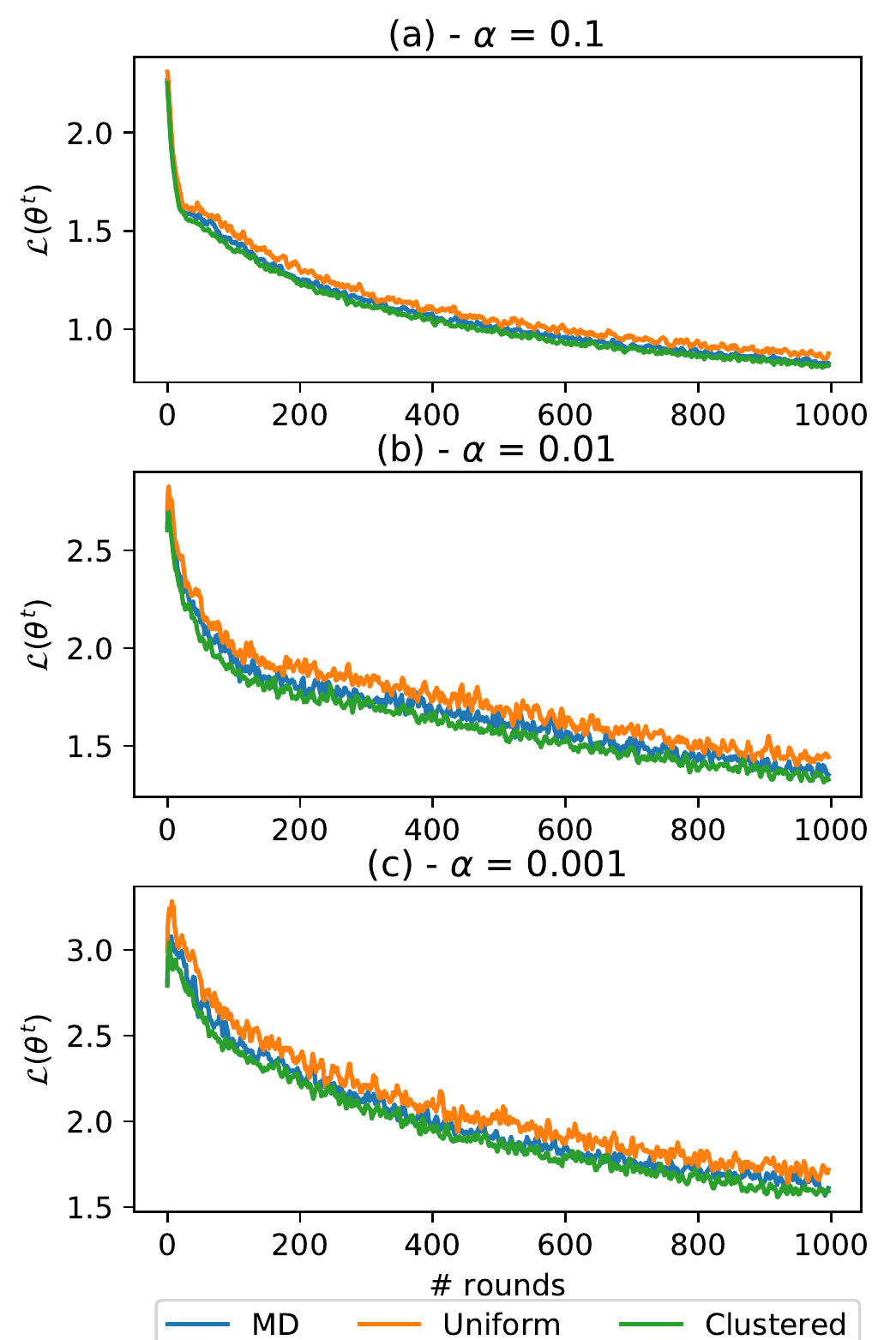}
	\caption{Convergence speed of the global loss with MD sampling and Uniform sampling when considering $n=100$ clients, while sampling $m = 10$ of them. Clients are partitioned using a Dirichlet distribution with parameter $\alpha = 0.1$ (a), $\alpha=0.01$ (b), and $\alpha = 0.001$ (c).
	Global losses are estimated on 30 different model initialization. }
	\label{fig:CIFAR_convergence}
\end{figure}

\end{document}